\newcommand{\pushright}[1]{\ifmeasuring@#1\else\omit\hfill$\displaystyle#1$\fi\ignorespaces}
\newcommand{\pushleft}[1]{\ifmeasuring@#1\else\omit$\displaystyle#1$\hfill\fi\ignorespaces}
\newcommand{\fun}[1]{\ensuremath{\mathopen{}\mathclose\bgroup\left(#1\aftergroup\egroup\right)}}
\newcommand{\vect}[1]{\ensuremath{\bm{#1}}}
\newcommand{\states}{\ensuremath{\mathcal{S}}}
\newcommand{\actions}{\ensuremath{\mathcal{A}}}
\newcommand{\probtransitions}{\ensuremath{\mathbf{P}}} 
\newcommand{\rewards}{\ensuremath{\mathcal{R}}}
\newcommand{\act}[1]{\ensuremath{\mathit{Act}\ifthenelse{\equal{#1}{}}{}{(#1)}}}
\newcommand{\policy}{\ensuremath{\pi}}
\newcommand{\stationary}[1]{\ensuremath{\xi_{#1}}}
\newcommand{\vecrewards}{\ensuremath{\vect{\rewards}}}
\newcommand{\momdp}{\ensuremath{\vect{\mathcal{M}}}}
\newcommand{\ld}{\ensuremath{\succ_L}}
\newcommand{\lld}[1][]{\ensuremath{\succ_{\lambda_{#1}}}}
\newcommand{\pd}{\ensuremath{\succ_P}}
\newcommand{\pf}[1]{\ensuremath{\mathcal{F}\left(#1\right)}}
\newcommand{\lf}[1]{\ensuremath{\mathcal{L}\left(#1\right)}}
\newcommand{\llf}[2]{\ensuremath{\mathcal{L}\left(#1; #2\right)}}
\newcommand{\observationfn}{\ensuremath{\mathcal{O}}}
 \newcommand{\encoderparameter}{\ensuremath{}}
\newcommand{\expectedsymbol}[1]{\ensuremath{\mathop{\mathbb{E}}\ifthenelse{\equal{#1}{}}{}{_{#1}}}}
\newcommand{\normal}[3]{\ensuremath{\displaystyle \ifthenelse{\equal{#3}{}}{\mathcal{N}(#1, #2)}{\mathcal{N}(#3\,;\, #1, #2)}}}
\newcommand{\overbar}[1]{\mkern 1.5mu\overline{\mkern-1.5mu#1\mkern-1.5mu}\mkern 1.5mu}
\newcommand{\overbarit}[1]{\,\overline{\!{#1}}}
\newcommand{\embed}{\ensuremath{\phi}}
\newcommand{\latentprobtransitions}{\ensuremath{\overbar{\probtransitions}}}
\newcommand{\latentrewards}{\ensuremath{\overbarit{\rewards}}}
\newcommand{\latentbeliefupdate}{\ensuremath{\overbar{\tau}}}
\newcommand{\localtransitionloss}[1]{L_{\probtransitions}}
\newcommand{\localrewardloss}[1]{L_{\rewards}}
\newcommand{\observationloss}[1]{\ensuremath{L_{\observationfn}}}
\newcommand{\beliefloss}[1]{\ensuremath{L_{\latentbeliefupdate}}}
\newcommand{\onpolicyrewardloss}[1]{\ensuremath{L_{\latentrewards}^{\varphi}}}
\newcommand{\onpolicytransitionloss}[1]{\ensuremath{L_{\latentprobtransitions}^{\varphi}}}
\newcommand{\KR}[1]{\ensuremath{\ifthenelse{\equal{#1}{}}{K_{\latentrewards}}{K_{\latentrewards}^{#1}}}}
\newcommand{\KP}[1]{\ensuremath{\ifthenelse{\equal{#1}{}}{K_{\latentprobtransitions}}{K_{\latentprobtransitions}^{#1}}}}
\newcommand{\originaltolatentstationary}[1]{{\latentprobtransitions_{\embed_{\encoderparameter}\stationary{\ifthenelse{\equal{#1}{}}{\policy}{#1}}}}}
\def\1{\bm{1}}
\def\vv{{\bm{v}}}
\DeclareMathAlphabet{\mathsfit}{\encodingdefault}{\sfdefault}{m}{sl}
\SetMathAlphabet{\mathsfit}{bold}{\encodingdefault}{\sfdefault}{bx}{n}
\DeclareMathOperator*{\argmax}{arg\,max}
\useunder{\uline}{\ul}{}
\theoremstyle{plain}
\newtheorem{theorem}{Theorem}
\newtheorem{lemma}[theorem]{Lemma}
\theoremstyle{definition}
\newtheorem{definition}{Definition}
\theoremstyle{remark}
\newenvironment{proofsketch}{%
\proof}{\endproof}
\begin{document}

\title[Scalable Multi-Objective Reinforcement Learning]{Scalable Multi-Objective Reinforcement Learning with Fairness Guarantees using Lorenz Dominance}

\author{Dimitris Michailidis}
\authornote{Corresponding Author.}
\orcid{0000-0002-0106-1126}
\email{d.michailidis@uva.nl}
\affiliation{%
  \institution{University of Amsterdam}
  \city{Amsterdam}
  \state{Noord Holland}
  \country{Netherlands}
}

\author{Willem Röpke}
\orcid{0000-0001-5045-6127}
\email{willem.ropke@vub.be}
\affiliation{%
  \institution{Vrije Universiteit Brussel}
  \city{Brussels}
  \country{Belgium}}

\author{Diederik M.\ Roijers}
\orcid{0000-0002-2825-2491}
\email{diederik.roijers@vub.be}
\affiliation{%
  \institution{Vrije Universiteit Brussel \& City of Amsterdam}
  \city{Amsterdam}
  \state{Noord Holland}
  \country{Netherlands}
}

\author{Sennay Ghebreab}
\orcid{0009-0007-5788-4635}
\email{s.ghebreab@uva.nl}
\affiliation{%
  \institution{University of Amsterdam}
  \city{Amsterdam}
  \state{Noord Holland}
  \country{Netherlands}
}

\author{Fernando P. Santos}
\orcid{0000-0002-2310-6444}
\email{f.p.santos@uva.nl}
\affiliation{%
  \institution{University of Amsterdam}
  \city{Amsterdam}
  \state{Noord Holland}
  \country{Netherlands}
}

\renewcommand{\shortauthors}{Michailidis, Röpke, Roijers, Ghebreab \& Santos}

\begin{abstract}
Multi-Objective Reinforcement Learning (MORL) aims to learn a set of policies that optimize trade-offs between multiple, often conflicting objectives. MORL is computationally more complex than single-objective RL, particularly as the number of objectives increases. Additionally, when objectives involve the preferences of agents or groups, incorporating fairness becomes both important and socially desirable. This paper introduces a principled algorithm that incorporates fairness into MORL while improving scalability to many-objective problems. We propose using Lorenz dominance to identify policies with equitable reward distributions and introduce $\lambda$-Lorenz dominance to enable flexible fairness preferences. We release a new, large-scale real-world transport planning environment and demonstrate that our method encourages the discovery of fair policies, showing improved scalability in two large cities (Xi’an and Amsterdam). Our methods outperform common multi-objective approaches, particularly in high-dimensional objective spaces.
\end{abstract}

\received{17 July 2025}
\received[accepted]{12 February 2026}

\maketitle

\section{Introduction}
Reinforcement Learning (RL) is a powerful framework for sequential decision-making, where agents learn to maximize long-term rewards by interacting with an environment \cite{wang_deep_2020}. In most RL applications, rewards are constructed by aggregating multiple criteria (or objectives) into a single scalar value, typically via a weighted sum \cite{hayes_practical_2022}. However, this approach assumes prior knowledge of the precise preferences among objectives---a condition that rarely holds in real-world settings. Moreover, many real-world problems inherently involve multiple, often conflicting, objectives. Defining a scalar reward function before training can therefore introduce bias into the learning process, potentially excluding policies that differ primarily in their objective weightings \cite{vamplew_scalar_2022}.

Multi-Objective Reinforcement Learning (MORL) addresses this challenge by defining a separate reward function for each objective \cite{hayes_practical_2022}. This yields a set of candidate optimal policies, rather than a single solution, that decision-makers select according to their preferences. MORL has been successfully applied in various domains, such as decision-making under unknown preferences \cite{roijers_computing_2015,alegre_optimistic_2022}, human-value alignment \cite{peschl_moral_2021,rodriguez-soto_instilling_2022}, robot locomotion \cite{cao2021efficient}, and multi-agent systems \cite{radulescu_equilibria_2019,ropke_reinforcement_2023,vamplew_scalar_2022}.

Single-policy MORL learns one policy based on predefined knowledge about decision-makers' preferences. However, such preferences are not always known at training time. Multi-policy methods in MORL handle unknown preferences by assuming a monotonically increasing utility function and optimizing all objectives simultaneously, approximating the Pareto front of optimal policies \cite{mannion_multi-objective_2021,hayes_practical_2022,reymond_pareto_2022}. Multi-policy methods face scalability challenges as the solution set can scale exponentially with the number of objectives. This issue becomes particularly severe in \textit{many-objective} optimization, where the number of objectives is large \cite{nguyen_multi-objective_2020,perny_approximation_2013}. Consequently, multi-policy methods often struggle to scale efficiently in these scenarios, highlighting the need for further research in \textit{many-objective RL} \cite{hayes_practical_2022}.

Learning the entire Pareto front is often unnecessary, since some policies may be inherently undesirable \cite{osika2023what}. For example, in fairness-critical applications, some Pareto-non-dominated policies may result in unequal reward distributions across objectives. This is especially problematic when objectives represent the utilities of different societal groups \cite{pmlr-jabbari-fairness,cimpean_multi-objective_2023}. Although egalitarian approaches such as maxmin or equal weighting can address this issue, they assume a predefined, exact preference over objectives, often limiting flexibility and sometimes yielding inefficient results \cite{siddique2020learning}. This reveals a research gap in MORL: no current multi-policy method (a) guarantees fairness to the decision-maker, (b) allows control over fairness constraints, and (c) scales effectively to many-objective problems.

In this paper, we propose using Lorenz dominance to identify a subset of the Pareto front that ensures equitable reward distribution, without requiring predefined preferences. We extend this approach with $\lambda$-Lorenz dominance, enabling decision-makers to adjust the strictness of fairness constraints through a parameter $\lambda$. We formally show that $\lambda$-Lorenz dominance interpolates between Lorenz and Pareto dominance, providing decision-makers with fine-grained control over the degree of fairness. We also introduce Lorenz Conditioned Networks (LCN), a novel algorithm for optimizing $\lambda$-Lorenz dominance.

To support scalability in many-objective settings, we develop a new large-scale, multi-objective environment for planning transport networks in real-world cities with a flexible number of objectives. We conduct experiments in the cities of Xi’an (China) and Amsterdam (Netherlands) and show that LCN generates fair policy sets in large objective spaces. Since Lorenz-optimal sets are typically smaller than Pareto-optimal sets \cite{perny_approximation_2013}, LCN scales effectively, particularly in many-objective problems. We release the code and data used to generate our results alongside this paper \footnote{GitHub repository: \url{https://github.com/sias-uva/mo-transport-network-design}}.

\section{Related Work}
Our work intersects multi-policy methods for MORL \cite{hayes_practical_2022} and algorithmic fairness in sequential decision-making problems \cite{gajane_survey_2022}.
\subsection{Multi-Policy MORL}
Early multi-policy methods in RL, such as Pareto Q-learning, were limited to small-scale environments \cite{moffaert_multi-objective_2014,ruiz-montiel_temporal_2017,parisi2016multi}. To improve scalability, many approaches assume linear decision-maker preferences, resulting in a simpler solution set called the convex coverage set \cite{roijers_computing_2015,abels_dynamic_2019,felten_decomposition}. For example, GPI-LS---a state-of-the-art method and our baseline---decomposes the multi-objective problem into single-objective subproblems. Each subproblem uses a reward function that is a convex combination of the original vectorial reward. It then trains a neural network to approximate optimal policies for different weights \cite{alegre_gpi}. Beyond linear preferences, the Iterated Pareto Referent Optimization (IPRO) method uses a similar decomposition-based approach and provides strong theoretical guarantees. However, its performance degrades as the number of objectives increases \cite{ropke2024divide}. In contrast, Pareto Conditioned Networks (PCNs) do not decompose the problem, but instead train a return-conditioned policy, \cite{reymond_pareto_2022,reymond_exploring_2022,delgrange_wae-pcn_2023}. PCNs have been applied in various domains, including water management \cite{osika2025multi}, pandemic intervention policies \cite{chen2025learning}, autonomous cyber defence \cite{o2025multi} and battery control \cite{huoptimize}. Similar to PCN, PD-MORL approximates the Pareto front by uniformly sampling preferences across the preference space \cite{basaklar2022pd}, while C-MORL \cite{liu2024c} bridges constrained policy optimization and MORL to efficiently discover the Pareto front through parallel policy training. We propose a method inspired by PCNs that focuses on fairness, avoiding the need to search the entire preference space. This enables scalability to higher dimensions, learning a set of fair policies and offering flexibility in setting the degree of fairness preference. 

\subsection{Fairness in MORL}
Research on fairness in RL can be categorized along two main themes \cite{gajane_survey_2022}: fairness in domains where individuals belong to protected groups (societal bias) and fairness in resource allocation problems (non-societal bias). Our work aligns closely with the first theme, focusing on the fair distribution of benefits among different societal groups. Group fairness has been studied in RL before, specifically in multi-agent scenarios \cite{SatijaLPP23,ju2023achieving}, where agents learn individual policies. While we focus on single-agent RL, we assume that the agent's policies will affect groups of individuals, who may have conflicting preferences

Achieving fairness in RL often requires balancing multiple objectives. Many studies in this area incorporate diverse objectives into a single fairness-based reward function. This is typically achieved through linear reward scalarization \cite{rodriguez-soto_multi-objective_2021,chen_same-day_2023,blandin2024group}, nonlinear reward combinations, and welfare functions (e.g. the Generalized Gini Index) \cite{siddique2020learning,hu_towards_2023,fan_welfare_2023}, and other reward-shaping mechanisms \cite{zimmer_learning_2021,yu_policy_2022,mandal_socially_2023,kumar_fairness_2023}. Alternatively, some methods adapt the reward function during training to satisfy fairness constraints \cite{chen_bringing_2021}. These approaches require encoding fairness principles into the reward functions a priori, requiring preference information before training. Our method avoids these assumptions.

Our work is closely related to \cite{cimpean_multi-objective_2023}, which proposes a formal MORL fairness framework that encodes six fairness notions as objectives. The authors use PCNs to identify Pareto-optimal trade-offs among these fairness notions. While our method can be used within this framework, it differs by not predefining any specific fairness notion. Instead, it learns a set of non-dominated policies across all objectives, allowing the decision-maker to define their fairness criteria after training and select a policy accordingly. 

Our method relies on Lorenz dominance, a criterion that favors policies with balanced reward distributions. Lorenz dominance has previously been used in multi-objective optimization methods \cite{CHABANE20191,fasihi_bi-objective_2023,bederina2024generating}, but its application in MORL has been limited. \cite{perny_approximation_2013} first introduced the Lorenz criterion in Multi-Objective Markov Decision Processes (MOMDPs), providing much of the theoretical foundation we rely on this work. However, their experiments were limited to small-scale, randomly generated MOMDPs. Since then, some works inspired by the Lorenz curve have emerged; for example, \cite{siddique2020learning} uses the Generalized Gini Function to create a weighted sum in the Lorenz space for single-objective RL. Building on the framework of \cite{perny_approximation_2013}, we train a neural network to learn the full Lorenz front, enabling flexible degrees of fairness for the decision-maker and demonstrating scalability to significantly larger and more realistic environments.

\section{Preliminaries}
In this section, we formally introduce Multi-Objective Reinforcement Learning (MORL) and Lorenz dominance.
\subsection{Multi-Objective Reinforcement Learning}

We consider reinforcement learning agents that interact with a Multi-Objective Markov Decision Process (MOMDP). An MOMDP is represented as a tuple $\momdp = \langle \states, \actions, \probtransitions, \vecrewards, \gamma \rangle$ consisting of a set of states $\states$, set of actions $\actions$, transition function $\mathcal{T}: \mathcal{S} \times \mathcal{A} \times \mathcal{S}  \to [0,1]$, vector-based reward function $\vecrewards: \states \times \actions \times \states \to \mathbb{R}^d$, with $d \geq 2$ the number of objectives, and a discount factor $\gamma \in [0, 1)$. In an MOMDP, we consider deterministic policies $\policy: \states \to \actions$, which map states to actions.

With vector-based rewards, there is generally no single optimal policy in the usual sense of scalar rewards (e.g. policy maximizing reward). Instead, we learn a set of optimal policies using a dominance criterion. In MORL, Pareto dominance is commonly applied, resulting in a solution set called the Pareto front.

\begin{definition} (Pareto dominance)
Consider two vectors $\vv, \vv' \in \mathbb{R}^d$. We say that $\vv$ Pareto dominates $\vv'$, denoted $\vv \succ_P \vv'$, when $\forall j \in \{1, \dotsc, d\}: v_j \geq v'_j$ and $\vv \neq \vv'$.
\end{definition}

In essence, $\vv$ Pareto dominates $\vv'$ when it is at least as good in all objectives and strictly better in at least one. For a set of vectors $D$, the Pareto front $\pf{D}$ contains all non-Pareto-dominated vectors. 

\begin{definition} (Pareto front)
Given a set of vectors $D \subseteq \mathbb{R}^d$, the Pareto front $\pf{D}$ is the subset of vectors in $D$ that are not Pareto-dominated by any other vector in $D$. Formally,
\[
\pf{D} = \{ \vv \in D \mid \nexists \; \vv' \in D \text{ such that } \vv' \succ_P \vv \}.
\]
\end{definition}

\subsection{Fairness in Many-Objective Reinforcement Learning}
\label{subsec:fairness_mo}
One common fairness approach in Multi-Objective Reinforcement Learning (MORL) is to treat all objectives as equally important, optimizing a single, equally weighted objective. However, this assumes absolute equality in reward distribution, which may be infeasible in certain problems, and yields a single policy, without offering options to the decision-maker. Another approach, inspired by Rawlsian justice theory and the \textit{maxmin} principle, focuses on maximizing the minimum reward between objectives. However, this often results in solutions that are not efficient for all users \cite{siddique2020learning}.

To train a multi-policy algorithm with fair trade-offs, it is necessary to identify all optimal trade-offs that achieve a fair distribution of rewards. To achieve this, we use Lorenz dominance, a refinement of Pareto dominance that considers the distribution of values within a vector \cite{perny_approximation_2013}. This concept, traditionally used in economics to assess income inequality \cite{ranking_income}, is adapted here for fairness in MORL.

\begin{definition} (Lorenz dominance)
Let $L(\vv)$ be the Lorenz vector of a vector $\vv \in \mathbb{R}^d$, defined as follows:
\begin{equation}
L(\vv) = \Bigl( v_{(1)}, v_{(1)} + v_{(2)}, \cdots , \sum_{i=1}^d v_{(i)} \Bigl), 
\end{equation}
where $v_{(1)} \leq v_{(2)} \leq \cdots \leq v_{(d)}$ are the values of the vector $\vv$, sorted in increasing order. A vector $\vv$ Lorenz dominates a vector $\vv'$, when its Lorenz vector $L(\vv)$ Pareto dominates the Lorenz vector $L(\vv')$ \cite{perny_approximation_2013}. We use $\vv \ld \vv'$ to denote that $\vv$  Lorenz dominates $\vv'$.
\end{definition}

For a set of vectors $D$, the Lorenz front $\lf{D}$ contains all vectors that are non-Lorenz-dominated. 

\begin{definition} (Lorenz front)
Given a set of vectors $D \subseteq \mathbb{R}^d$, the Lorenz front $ \lf{D} $ is the subset of vectors in D that are not Lorenz-dominated by any other vector in D. Formally,
\[
\lf{D} = \{ \vv \in D \mid \nexists \; \vv' \in D \text{ such that } L(\vv') \succ_P L(\vv) \},
\]
where \( L(\vv) \) denotes the Lorenz vector of \( \vv \), and $\succ_P$ is the Pareto dominance relation.
\end{definition}

Our approach builds on the Pigou-Dalton transfer principle from economics \cite{perny_approximation_2013,adler_pigou-dalton_2013}. This principle states that a redistribution of value from a better-off to a worse-off component improves fairness, as long as it does not reverse their ranking. Formally, given a reward vector $\vv \in \mathbb{R}^d$ with two components $v_i > v_j$ for some indices $i$ and $j$, transferring a small amount $\epsilon$ ($0 < \epsilon \leq v_i - v_j$) from $v_i$ to $v_j$ yields a new vector $\vv' = \vv - \epsilon I_{i} + \epsilon I_{j}$, where $I_{i} \in \mathbb{R}^d$ is an \textit{indicator vector} with a 1 in the $i$th position and 0 elsewhere (and similarly for $I_{j}$). This transformation preserves the total reward and ranking, but results in a more equitable distribution, and is therefore called a Pigou-Dalton transfer \cite{perny_approximation_2013}.

Lorenz-based fairness evaluates policies based on how equitably rewards are distributed across objectives. A solution is considered fairer if it can be obtained via a sequence of Pigou-Dalton transfers from another, implying that it Lorenz-dominates the other. For example, consider $\vv = (8, 0)$. A transfer of $\epsilon = 3$ yields $\vv' = (5, 3)$. Even though the total reward remains 8, and the rank between entries is preserved, $\vv'$ is considered fairer under Lorenz dominance.

In MORL, we define vectors $\vv^\pi, \vv^{\pi'}$ as the expected return of the policies $\pi, \pi'$, across all objectives of the environment, respectively. We define fair policies as those that are non-Lorenz-dominated. The set of non-dominated value vectors is called a \textit{Lorenz coverage set}, which is usually (but not necessarily) significantly smaller than a Pareto coverage set \cite{perny_approximation_2013}. Our fairness approach satisfies the criteria outlined in \cite{siddique2020learning}. It is Lorenz-\textit{efficient} as the learned policies are non-Lorenz-dominated; it is \textit{impartial}, since it treats all objectives as equally important, and it is \textit{equitable}, as Lorenz dominance satisfies the Pigou-Dalton principle \cite{perny_approximation_2013}. In \Cref{fig:pd_vs_ld}, we show the difference between Pareto and Lorenz dominance. The latter extends the area of undesired solutions, allowing for fewer non-dominated solutions and providing fairness guarantees in the two objectives.

\begin{figure}[t!]
\begin{center}
\centerline{\includegraphics[width=4.5in]{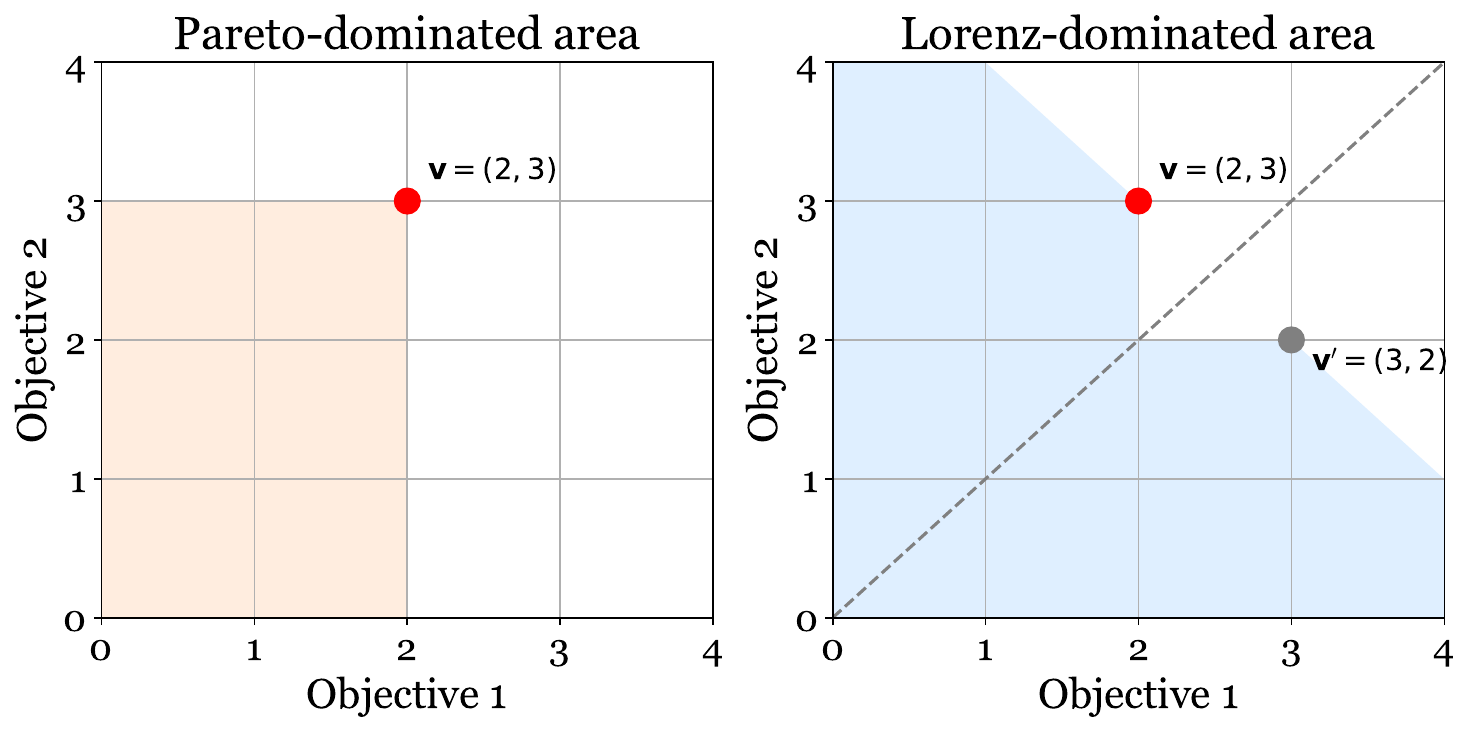}}
\Description{Illustration of the Pareto-dominated and Lorenz-dominated areas of vector \vv. The Lorenz-dominated area includes the Pareto-dominated area and is symmetric around the equality line, except for the symmetric vector \vv' = (3, 2), which expands the dominance and reduces the number of acceptable trade-offs.}
\caption{The Pareto and Lorenz-dominated areas of vector $\vv$. The Lorenz-dominated area includes the Pareto-dominated area, and is symmetric around the equality line, except for the symmetric vector $\vv' = (3, 2)$. This creates an expanded dominance, resulting in fewer acceptable trade-offs.}
\label{fig:pd_vs_ld}
\end{center}
\end{figure}

\section{Flexible Fairness with \texorpdfstring{$\lambda$}{lambda}-Lorenz Dominance}
\label{subsec:lambda_lcn}
To give decision-makers fine-grained control over the fairness needs of their specific problem, we introduce a novel criterion, called $\lambda$-Lorenz dominance. $\lambda$-Lorenz dominance operates directly on the return vectors, without objective weights (such as the Generalized Gini Index \cite{siddique2020learning}). By selecting a single parameter $\lambda \in [0, 1]$, $\lambda$-Lorenz dominance allows decision-makers to balance Pareto and Lorenz optimality. We formally define $\lambda$-Lorenz dominance in \Cref{def:lld}.

\begin{definition}[$\lambda$-Lorenz dominance]
\label{def:lld}
Let $\sigma(\vv)$ be the vector $\vv$ sorted in increasing order. Given $\lambda \in [0, 1]$, a vector $\vv$ $\lambda$-Lorenz dominates another vector $\vv'$, denoted $\vv \lld \vv'$ if,
\begin{equation}
    \lambda \sigma(\vv) + (1-\lambda) L(\vv) \succ_P \lambda \sigma(\vv') + (1-\lambda) L(\vv').
\end{equation}
\end{definition}

Intuitively, for $\lambda=1$, it is assumed that the decision-maker cares equally about all objectives and thus may reorder them. This relaxation allows some non-Pareto-dominated vectors to become dominated. Consider, for example, $(4, 2)$ and $(1, 3)$. While no vector is Pareto-dominated, reordering the objectives in increasing order yields $(2, 4)$ and $(1, 3)$, in which case the second vector is now dominated. This approach may already reduce the size of the Pareto front, but does not yet achieve the same fairness constraints imposed by the Lorenz front. At the other extreme, setting $\lambda=0$ ensures that the solution set is equal to the Lorenz front.

The $\lambda$-Lorenz front of a set $D$, denoted $\llf{D}{\lambda}$, contains all vectors that are pairwise non-$\lambda$-Lorenz-dominated. In \Cref{th:front-relations}, we formally show that the $\lambda$-Lorenz fronts form increasing nested sets as $\lambda$ varies from $0$ to $1$: lower $\lambda$ values mean increasingly selective coverage sets that interpolate between the Pareto and Lorenz fronts. The full proof is provided in the supplementary material.

\begin{theorem}
\label{th:front-relations}
$\forall \lambda_1, \lambda_2: 0 \leq \lambda_1 \leq \lambda_2 \leq 1$ and $\forall D \subset \mathbb{R}^d$ the following relations hold.
\begin{equation}
    \lf{D} \subseteq \llf{D}{\lambda_1} \subseteq \llf{D}{\lambda_2} \subseteq \pf{D}.
\end{equation}
\end{theorem}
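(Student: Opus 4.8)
The plan is to reduce the statement to a single linear-algebraic observation. Writing $M$ for the $d\times d$ lower-triangular all-ones matrix, the Lorenz vector is $L(\vv) = M\,\sigma(\vv)$, so that
\[
  \lambda\,\sigma(\vv) + (1-\lambda)\,L(\vv) \;=\; A_\lambda\,\sigma(\vv),
  \qquad A_\lambda \;=\; \lambda I + (1-\lambda)M .
\]
The matrix $A_\lambda$ is lower triangular with every diagonal entry equal to $1$, hence invertible; and for $\lambda\in[0,1]$ all of its entries are nonnegative. Note also that $\lld[0]$ is literally Lorenz dominance, so $\llf{D}{0}=\lf{D}$. It therefore suffices to prove (i) $\llf{D}{\lambda_1}\subseteq\llf{D}{\lambda_2}$ whenever $\lambda_1\le\lambda_2$, and (ii) $\llf{D}{\lambda}\subseteq\pf{D}$ for every $\lambda$; taking the lower parameter to be $0$ in (i) then also gives $\lf{D}=\llf{D}{0}\subseteq\llf{D}{\lambda_1}$.

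For (i) I would prove the pointwise claim that $\vv\lld[2]\vv'$ implies $\vv\lld[1]\vv'$, i.e.\ that the dominance relation only \emph{widens} as $\lambda$ decreases; since $\llf{D}{\lambda}$ is exactly the set of elements of $D$ not dominated under $\lld$, a wider relation yields a smaller front. Put $\vw = \sigma(\vv)-\sigma(\vv')$ and $S_k = \sum_{j\le k}w_j$ with $S_0=0$. A one-line computation gives $(A_\lambda\vw)_k = S_k - \lambda S_{k-1}$, so the componentwise inequality $A_{\lambda_2}\sigma(\vv)\pde A_{\lambda_2}\sigma(\vv')$ says precisely $S_k\ge\lambda_2 S_{k-1}$ for all $k$. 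Since $S_0=0$, an easy induction gives $S_k\ge0$ for every $k$; but then $S_k\ge\lambda_2 S_{k-1}\ge\lambda_1 S_{k-1}$, which is $A_{\lambda_1}\sigma(\vv)\pde A_{\lambda_1}\sigma(\vv')$. The strictness requirement (nonequality of the two transformed vectors) amounts to $\vw\neq\vzero$ by invertibility of $A_\lambda$ and is independent of $\lambda$, so it carries over; this gives (i).

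For (ii) I would argue the contrapositive: if $\vv'\pd\vv$ then $\vv'\lld\vv$. The one nontrivial ingredient is the classical fact that componentwise domination is preserved under sorting: $v'_j\ge v_j$ for all $j$ implies $\sigma(\vv')_k\ge\sigma(\vv)_k$ for all $k$ (for instance via the identity $\sigma(\vv)_k=\min_{|S|=k}\max_{j\in S}v_j$). Multiplying by the nonnegative matrix $A_\lambda$ preserves this, giving $A_\lambda\sigma(\vv')\pde A_\lambda\sigma(\vv)$; moreover if $\sigma(\vv')=\sigma(\vv)$ then $\vv$ and $\vv'$ have the same entry-multiset, so $\sum_j v'_j=\sum_j v_j$, which together with $v'_j\ge v_j$ forces $\vv'=\vv$, contradicting $\vv'\pd\vv$. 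Hence $A_\lambda\sigma(\vv')\neq A_\lambda\sigma(\vv)$, i.e.\ $\vv'\lld\vv$. Combining (i) and (ii) with $\llf{D}{0}=\lf{D}$ yields the full chain.

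The main obstacle is part (i), and within it the step of recognizing the partial-sum reformulation $(A_\lambda\vw)_k=S_k-\lambda S_{k-1}$ together with the induction forcing all $S_k\ge0$: it is exactly this sign information that licenses replacing $\lambda_2$ by the smaller $\lambda_1$. The remaining pieces---invertibility and nonnegativity of $A_\lambda$, the sorting-preserves-Pareto lemma, and the bookkeeping that turns a widening of the dominance relation into a shrinking of the front---are routine.
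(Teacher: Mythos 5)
Your proof is correct, and it follows the same high-level decomposition as the paper --- (i) the dominance relation widens monotonically as $\lambda$ decreases, (ii) Pareto dominance implies $\lambda$-Lorenz dominance, and the identification $\llf{D}{0}=\lf{D}$ --- but the two key lemmas are established by genuinely different means. For (i), the paper first proves $\vv \lld \vv' \implies \vv \ld \vv'$ by contradiction (locating a smallest offending index of $L(\vv)-L(\vv')$) and then runs an algebraic manipulation that divides and multiplies by $\lambda_2$; your partial-sum identity $(A_\lambda \vw)_k = S_k - \lambda S_{k-1}$ packages both steps into a two-line induction, since $S_k \geq 0$ for all $k$ is exactly the statement $L(\vv)\pde L(\vv')$ of the paper's auxiliary lemma, and $S_k \geq \lambda_2 S_{k-1} \geq \lambda_1 S_{k-1}$ is the monotonicity step. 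For (ii), the paper cites the external fact that Pareto dominance implies Lorenz dominance and separately proves that sorting preserves Pareto dominance by an induction on the dimension with a case analysis over insertion positions; your route via the identity $\sigma(\vv)_k=\min_{|S|=k}\max_{j\in S}v_j$ followed by multiplication by the nonnegative matrix $A_\lambda$ obtains both facts at once and is self-contained. You also handle the strictness (non-equality) clause of Pareto dominance explicitly via invertibility of $A_\lambda$, which the paper leaves implicit, and you correctly use $\lf{D}=\llf{D}{0}$ where the paper's appendix writes $\llf{D}{1}$ (an apparent typo, since $\lambda=0$ is the Lorenz endpoint per Definition~\ref{def:lld}). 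In short: same architecture, cleaner and more economical proofs of the constituent lemmas.
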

\begin{proofsketch}
To prove \Cref{th:front-relations}, we provide three auxiliary results. First, we demonstrate that for all parameters $\lambda \in [0, 1]$ and vectors $\vv, \vv' \in \mathbf{R}^d$ we have that:
\begin{equation}
    \vv \lld \vv' \implies \vv \ld \vv'. 
\end{equation}
Together with some algebra, this result is subsequently used to show that for all parameters $\lambda_1$ and $\lambda_2$ such that $0 \leq \lambda_1 \leq \lambda_2 \leq 1$, 
\begin{equation}
    \vv \lld[2] \vv' \implies \vv \lld[1] \vv'. 
\end{equation}
Finally, we extend \cite[Theorem~1]{perny_approximation_2013} to show that Pareto dominance implies $\lambda$-Lorenz dominance as well. These components are combined to obtain the desired result.
\end{proofsketch}

It is a straightforward corollary that for $\lambda = 1, \llf{D}{\lambda} \subseteq \pf{D}$ while for $\lambda = 0, \llf{D}{\lambda} = \lf{D}$. In \Cref{fig:lcn_models} (C) we illustrate conceptually how the $\lambda$ controls the size of the coverage set to consider.

\section{Lorenz Conditioned Networks}
Lorenz Condition Networks (LCNs) are an adaptation of Pareto Conditioned Networks (PCNs) that aim to learn policies on the $\lambda$-Lorenz front. We use both the abbreviations PCN and PCNs (and likewise LCN and LCNs) interchangeably, depending on whether we refer to the framework as a whole (singular) or to the family of networks (plural). We begin this section by giving an overview of PCNs and identifying their drawbacks.

\subsection{Background: Pareto Conditioned Networks (PCN)}
A PCN is a supervised learning network designed for Multi-Objective Reinforcement Learning (MORL) \cite{reymond_pareto_2022}. It enables a single neural network to learn a diverse set of policies that approximates the Pareto front of optimal trade-offs between objectives. A PCN is a conditioned network, meaning that it is trained to take as input both the environment state and a desired return vector, and output a probability distribution over actions.

Formally, the PCN policy is parameterized as: $\pi_\theta(a_t \mid s_t, \hat{h}_t, \hat{\boldsymbol{R}}_{t})$, where $s_t$ is the current state, $\hat{h}_t$ is the desired horizon, and $\hat{\boldsymbol{R}}_{t}$ is a desired return vector over the $d$ objectives. The network is trained using supervised learning on transitions collected during exploration. Each transition includes the state, the action taken, and the return vector obtained from the trajectory, allowing the network to learn by imitating non-dominated behavior conditioned on return goals.

During training, PCN incrementally improves the quality of collected experiences by perturbing non-dominated return vectors in the experience replay (ER) buffer. A new desired return $\hat{\boldsymbol{R}}_{t}$ is sampled by adding noise to an existing non-dominated point (proportional to the standard deviation of current returns), which serves as a target for generating new exploratory trajectories. This process helps expand the diversity of collected policies and encourages coverage of the Pareto front.

To filter the ER buffer, PCN introduces a filtering heuristic that favors experiences closer to the Pareto front while preserving diversity. This is achieved by computing the Euclidean distance of each experience to the Pareto front (to prioritize proximity), and a crowding distance \cite{deb_fast_2000} (to encourage spread in objective space), and then applying penalties to overrepresented areas.

Despite its effectiveness in approximating the Pareto front, PCN suffers from two key drawbacks:
\begin{enumerate}
    \item Lack of fairness control --- PCN prioritizes Pareto optimality without considering fairness or equitable reward distribution across objectives, potentially favoring extreme, imbalanced solutions and leading to intractable learning in large state and objective spaces.
    \item Experience Replay volatility --- As new experiences are collected, many older points are replaced or re-evaluated, causing instability during training.
\end{enumerate}

LCNs belong to the same family of reward conditioned networks, also referred to as upside-down reinforcement learning, where a policy is trained as a single neural network through supervised learning \cite{kumar_reward-conditioned_2019,reymond_pareto_2022}. An LCN network learns multiple policies, each representing a Lorenz-optimal trade-off.

\subsection{LCN Network} Just like PCN, LCN uses a single neural network to learn a policy $\pi_\theta(a_t|s_t, \hat{h_t},\
{\hat{\boldsymbol{R}}}_{t})$, which maps the current state $s_t$, the desired horizon $\hat{h}_t$ and the desired return $\hat{\boldsymbol{R}}_{t}$ to the next action $a_t$. Note that $\hat{\boldsymbol{R}}_{t}$ is a vector with dimension $d$ equal to the number of objectives. The network receives an input tuple $\langle s_t, \hat{h}_t, \hat{\vect{R}}_t \rangle$ and returns a probability distribution over the set of potential next actions. It is trained with supervised learning on samples collected by the agent during exploration. The network updates its parameters using a cross-entropy loss function: 

\begin{equation}
    \label{eq:cross_entropy}
    \mathcal{J}(y,\pi) = -\frac{1}{N} \sum_{i=1}^{N} \sum_{a \in A} y_{a}^{(i)} \log \pi \left(a_t^{(i)} | s_t^{(i)}, h_t^{(i)}, \boldsymbol{R}_{t}^{(i)} \right),
\end{equation}
where $N$ is the batch size, $y_{a}^{(i)}$ is the $i$-th sample action taken by the agent (ground truth), $y_{a}^{(i)} = 1$ if $a_t = a$ and $0$ otherwise and $\pi(a_t^{(i)} | s_t^{(i)}, h_t^{(i)}, \boldsymbol{R}_{t}^{(i)})$  represents the predicted probability of action $a$  for the  $i$-th sample, conditioned on its specific state  $s_t^{(i)}$, horizon  $h_t^{(i)}$, and return  $\boldsymbol{R}_{t}^{(i)}$.
 
The training process involves sampling collected, non-dominated experiences and then training the policy with supervised learning to imitate these experiences. Given a sufficient number of good experiences, the agent will learn good policies.

\subsection{Collecting Experiences}
LCN learns the policy network $\pi_{\theta}$ by collecting experiences and storing them in an ER buffer. These experiences are then used to train the policy via supervised learning \cite{kumar_reward-conditioned_2019,reymond_pareto_2022}. Because action selection involves conditioning the network on a specified return, the primary mechanism for collecting higher-quality experiences is to iteratively improve the desired return used as the conditioning input.

To achieve this improvement, similarly to PCN, a non-dominated return is randomly sampled from the current non-dominated experiences in the ER buffer. This sampled return is then increased by a value drawn from a uniform distribution $U(0,\sigma_o)$, where $\sigma_o$ represents the standard deviation of all non-dominated points in the ER buffer \cite{reymond_pareto_2022}. The updated return is subsequently used as the input $\boldsymbol{\hat{R_t}}$ for the policy network. Through this iterative process---refining the condition, collecting improved experiences, and training the policy network on non-dominated experiences---the network progressively learns to approximate all non-dominated trade-offs, forming a Lorenz coverage set. To ensure that the ER buffer contains experiences that will contribute most to performance improvement, however, the buffer must be filtered to retain only the most useful experiences.

\subsection{Filtering Experiences} 
\label{subsec:filtering_experiences}
PCN improves the experience replay buffer by filtering out experiences that are far away from the currently approximated Pareto front. This is done by calculating the distance of each collected experience to the closest non-Pareto-dominated point in the buffer. In addition, a \textit{crowding distance} is calculated for each point, measuring its distance to its closest neighbors \cite{deb_fast_2000}. Points with many neighbors have a high crowding distance and are penalized, ensuring that ER experiences are distributed throughout the objective space \cite{reymond_pareto_2022}.
With this approach, the set of Pareto-optimal solutions can grow exponentially with the number of states and objectives \cite{perny_approximation_2013}, and maintaining a good ER buffer becomes a great challenge. This can be intuitively understood by a simple example provided in \cite{perny_approximation_2013}: consider a deterministic MOMDP with $N+1$ states, where each non-terminal state allows two actions with different two-objective reward vectors. The terminal state is absorbing, and assume a discount factor $\gamma = 1$. There are $2^{N+1}$ possible stationary deterministic policies, and exactly $2^N$ of them result in distinct value vectors at the initial state. These vectors take the form $(x, 2^N - 1 - x)$ for $x = 0, 1, \dots, 2^N - 1$, and all lie on the Pareto front since improving one objective necessarily worsens the other. Therefore, the number of Pareto-optimal value vectors grows exponentially with $N$, the number of non-terminal decision points.

This explosion is a consequence of the definition of Pareto dominance, which treats all objectives as equally important and assumes no prior knowledge of user preferences. Any improvement in any objective is valued, resulting in a broad set of solutions. However, in many practical applications, such a solution set may be unnecessary or even undesirable. For instance, when objectives correspond to benefits for distinct individuals or groups (and we know that fairness is preferred), some Pareto-optimal solutions may be undesirable, as they result in highly unequal outcomes.

It is important to note that this issue is not exclusive to Pareto dominance; other dominance relations, such as Lorenz dominance, can theoretically also produce exponentially large solution sets. Empirically, for a fixed number of objectives, approximating a Lorenz-optimal set within an error bound from the Lorenz front can be computed in polynomial time on the size of the state and objective spaces. We provide empirical results supporting this claim in \Cref{sec:experiment_setup}. This aligns with the intuition that stricter dominance criteria, such as those that prioritize fairness, tend to reduce the number of acceptable solutions by implicitly encoding preference information without relying on explicit weights.
This distinction is crucial for experience filtering: when prior knowledge about user preferences (e.g., fairness) is available, employing stricter dominance criteria can produce more computationally manageable ER buffers.

In Lorenz Conditioned Networks (LCN), the evaluation of each experience $e_i$ in the Experience Replay buffer $\mathcal{B}$ is determined by its proximity to the nearest \textit{non-Lorenz-dominated} point $l_j \in \lf{\mathcal{B}} \subseteq \mathcal{B}$. Here, the proximity between experiences is measured in the objective space, where each sampled experience $e_i$ is represented by its actual reward vector (not the condition used to generate the experience). Thus, the nearest non-Lorenz-dominated point is the one with the smallest distance in this space. In \Cref{fig:lcn_models} (A) we show an example of this distance calculation. To ensure meaningful distance comparisons across potentially differently scaled objectives, all objective vectors in the buffer are normalized to the $[0,1]$ range before computing distances. We denote the distance between an experience $e_i$ and a reference point $t_i$ as $d_{e_i, t_i} = || e_i - t_i||_2$, where $t_i = \min || e_i - l_j ||_2$ is the nearest non-Lorenz-dominated point, which we refer to as \textit{reference point}. We formalize the final distance for the evaluation $d_{\text{Lorenz}, i}$ as follows:
\begin{equation}
    \label{eq:er_filter}
    d_{\text{Lorenz}, i} = \begin{cases}
        d_{e_i, t_i} & \text{if } d_{cd, i} > \tau_{cd} \\
      \rho_{pen}(d_{e_i, t_i} +c) & \text{if } d_{cd, i} \leq \tau_{cd}
    \end{cases}
\end{equation}
Where $d_{cd}$ is the crowding distance of $i$ and $\tau_{cd}$ is the crowding distance threshold. A constant penalty $c$ is added to the points below the threshold, whose distance is additionally penalized by a penalty multiplier $\rho_{pen}$. For the experiments in this paper, we set $\rho_{pen} = 2$ and provide additional sensitivity analysis on \Cref{subsec:appendix:cd_sensitivity_analysis}.
The points in the ER buffer are sorted based on $d_{\text{Lorenz}}$, and those with the highest get replaced first when a better experience is collected. The threshold $\tau_{cd}$ separates serves to break high-density regions in the objective space and ensures diversity. Experiences with a crowding distance below this threshold lead to too similar policies and are therefore penalized. While this heuristic introduces a discontinuity at the threshold $\tau_{cd}$, in practice this acts as a strong bias toward maintaining diversity in the buffer, which benefits stable training.

\begin{figure}[t!]
\begin{center}
\centerline{\includegraphics[width=5.5in]{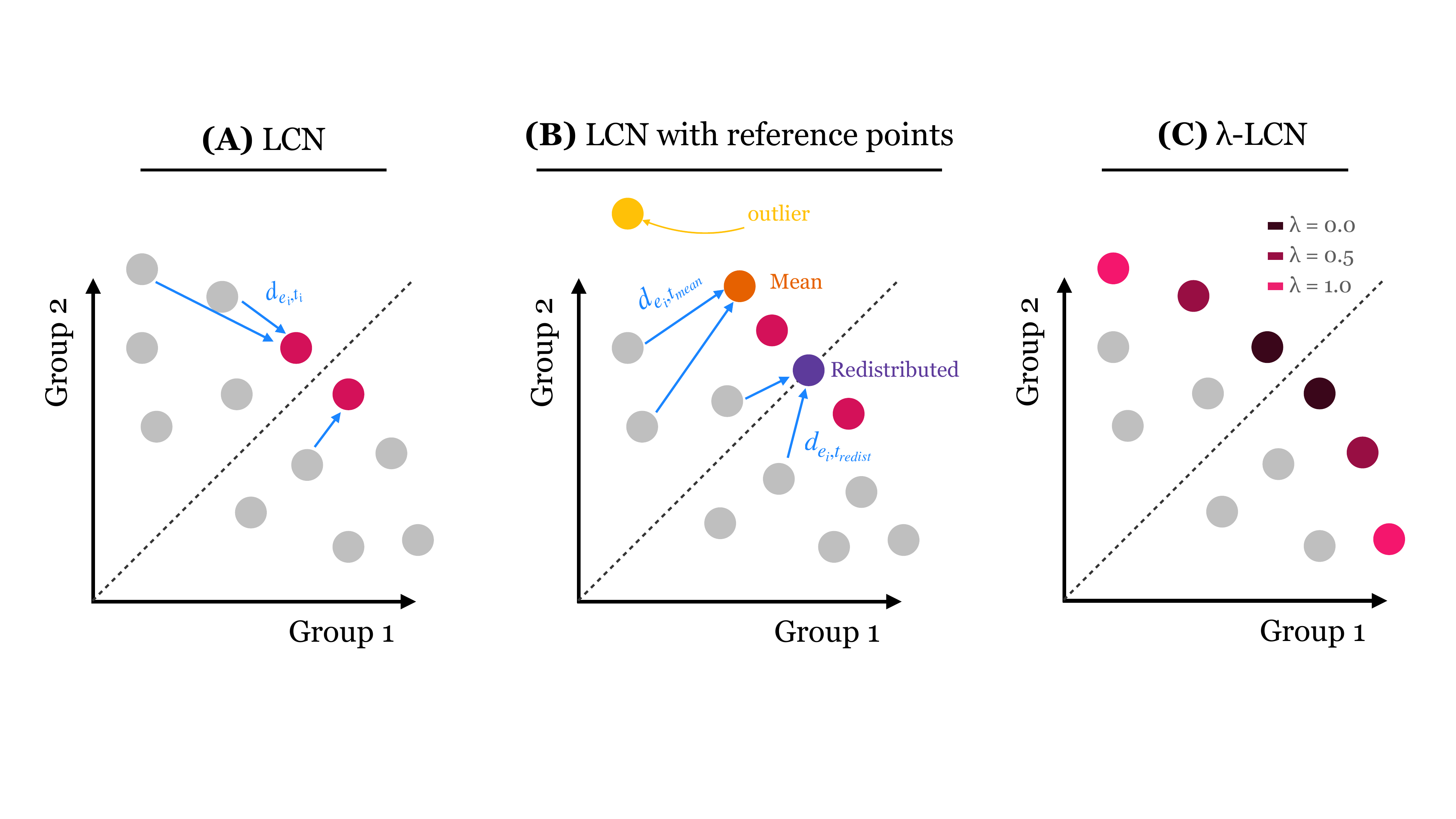}}
\Description{Illustration of Lorenz Conditioned Networks (LCNs) as a multi-policy method for fair trade-offs between objectives. Panel A shows standard LCNs balancing objectives by exploring the trade-off space. Panel B shows how reference points guide learning by filtering the Experience Replay buffer and reducing outlier influence. Panel C shows lambda-LCNs, which introduce flexibility in fairness preferences to allow more diverse policies. Each group corresponds to an objective related to group satisfaction, such as satisfied transportation demand.}
\caption{Lorenz Conditioned Networks (LCNs) is a multi-policy method that offers fair trade-offs between different objectives. In this illustrative example, each group (Group 1 and Group 2) corresponds to a distinct objective related to group satisfaction (e.g. satisfied transportation demand). \textbf{(A)} Standard LCNs learn policies that balance objectives by exploring the trade-off space between them. \textbf{(B)} Reference points help accelerate training by filtering the Experience Replay buffer and guiding learning toward desirable solutions, while reducing the influence of outliers. \textbf{(C)} $\lambda$-LCN introduces flexibility in the fairness preferences, enabling the relaxation of fairness constraints to accommodate more diverse policies.}
\label{fig:lcn_models}
\end{center}
\end{figure}

\subsection{Improving the Filtering Mechanism with Reference Points} 
\label{subsec:ref_points}
The nearest-point filtering method employed by previous works has two drawbacks. Firstly, during exploration, stored experiences undergo significant changes as the agent discovers new, improved trajectories. This leads to a volatile ER buffer and moving targets, posing stability challenges during supervised learning. Secondly, we know in advance that certain experiences, even if non-dominated, are undesirable due to their unfair distribution of rewards.

Consider, for example, vectors: $\vv=(8, 0), \boldsymbol{w} = (3, 4)$ and their corresponding Lorenz vectors $L(\vv) = (0, 8), L(\boldsymbol{w})=(3,7)$. Both $\vv$ and $\boldsymbol{w}$ are non-Lorenz-dominated, and would typically be used as targets for evaluating other experiences. However, $\vv$ is not a desirable target due to its unfair distribution of rewards (this is essentially a limitation of Lorenz dominance, when one objective is very large). To address these issues, we adopt the concept of reference points from Multi-Objective Optimization \cite{deb_moo_refpoints,cheng_ref_vector,felten_decomposition}, and propose reference points for filtering the experiences. We introduce two reference point mechanisms: a \textbf{redistribution} mechanism and a \textbf{mean} reference point mechanism. Both of these reference points are optimistic, meaning that the agent seeks to minimize the distance to them.

\subsubsection{Redistributed Reference Point (LCN-Redist)}
\label{subsec:redist}
This mechanism draws inspiration from the Pigou-Dalton principle we introduced in \Cref{subsec:fairness_mo}. Under this axiomatic principle, any experience in the ER buffer can be adjusted to provide a more desirable one. We identify the experience with the highest sum of rewards and evenly distribute the total reward across all dimensions of the vector. This is then assigned as the new \textit{reference point} $t$ for all experiences $e \in \mathcal{B}$:

\begin{equation}
    t_\text{redist} = \frac{1}{n} \sum_{i=1}^{n} \left( \underset{e \in D}{\mathrm{\argmax}} \sum_{j=1}^{n} e_j \right)_i.
\end{equation}

Note that $t$ is now the same for all $e \in \mathcal{B}$. Subsequently, we measure the distances of all $e \in \mathcal{B}$ to this reference point and filter out those farthest from it, according to Equation \ref{eq:er_filter} (replace $t_i$ with $t_\text{redist}$). In \Cref{fig:lcn_models} (B), we illustrate this transfer mechanism.

\subsubsection{Non-Dominated Mean Reference Point (LCN-Mean)}

Additionally, we propose an alternative, simpler reference point mechanism: a straightforward averaging of all non-Lorenz-dominated vectors in the experience replay (ER) buffer. This approach provides a non-intrusive method for incorporating collected experiences, while simultaneously smoothing out outlier non-dominated points. The reference point, denoted as $t_{\text{mean}}$, is defined as:

\begin{equation}
    t_{\text{mean}} = \frac{1}{|\lf{B}|} \sum_{l_j \in \lf{B}} l_j,
\end{equation}

where $\lf{B} = \{l_1, l_2, \ldots, l_j\}$ represent the set of non-Lorenz-dominated experiences in the ER buffer. In \Cref{fig:lcn_models} (B) we show how this approach defines a reference point. 

\section{A Large Scale Many-Objective Environment}
\label{sec:motndp}

Existing discrete MORL benchmarks are often small-scale, with small state-action spaces or low-dimensional objective spaces \cite{vamplew_empirical_2011,lopez_microscopic_2018}. In addition, they do not cover allocation of resources such as public transport, where fairness in the distribution is crucial. We introduce a novel and modular MORL environment, named the Multi-Objective Transport Network Design Problem (MO-TNDP). Built on MO-Gymnasium \cite{Alegre+2022bnaic}, the MO-TNDP environment simulates public transport design in cities of varying sizes and morphologies, addressing TNDP, an NP-hard optimization problem aiming to generate a transport line that maximizes the satisfied travel demand \cite{farahani2013review}.

In MO-TNDP, a city is represented as $H^{m\times n}$, a grid with equally sized cells. The mobility demand forecast between cells is captured by an Origin-Destination (OD) flow matrix $OD^{|H| \times |H|}$. Each cell $h \in H^{n \times m}$ is associated with a socioeconomic group $g \in \mathcal{R}$, which determines the dimensionality of the reward function. In this paper, we scale it from 2 to 10 groups (objectives).

\begin{figure}[ht]
\begin{center}
\centerline{\includegraphics[width=4.5in]{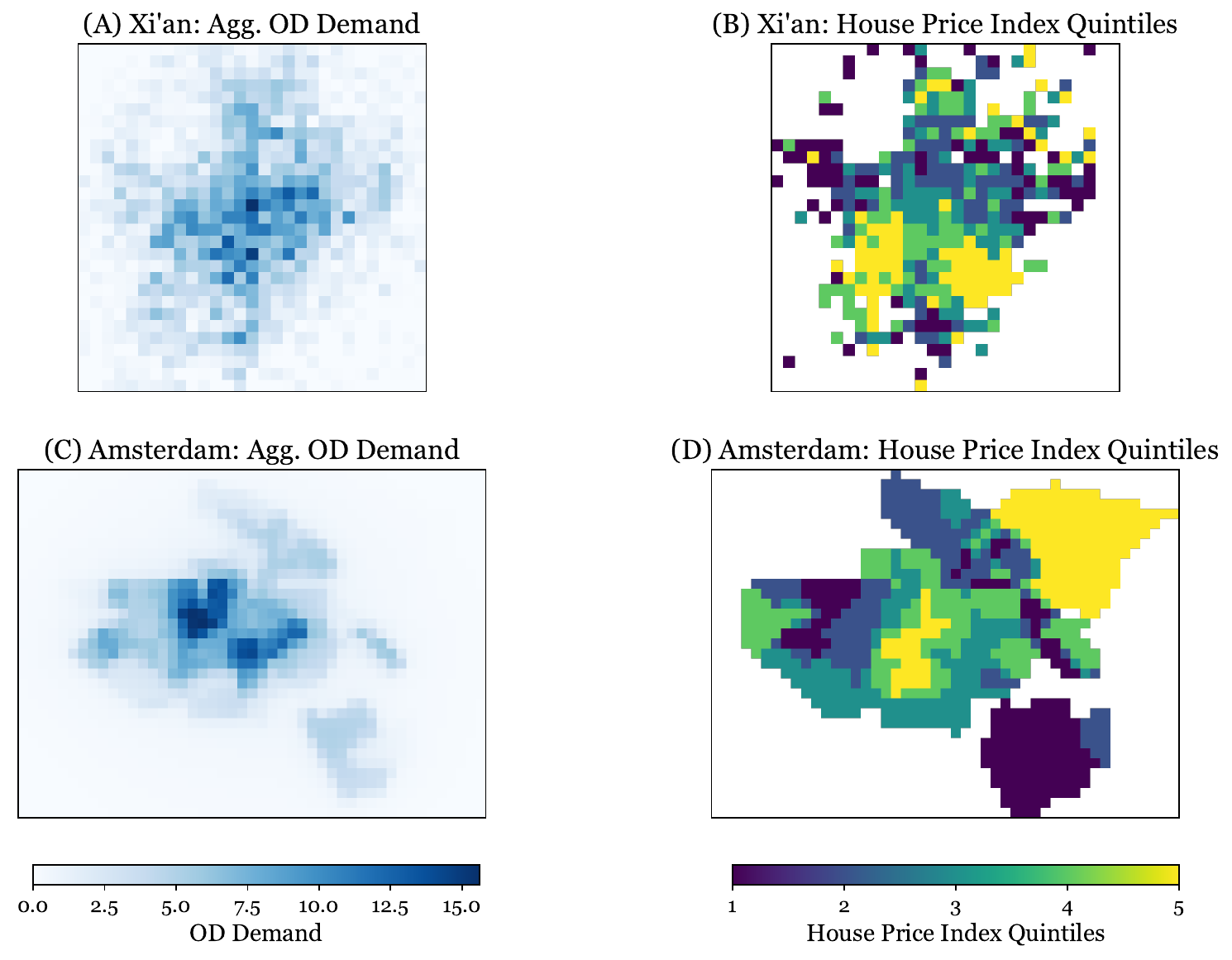}}
\Description{Two real-world instances of the MO-TNDP environment: Xi'an, China and Amsterdam, Netherlands. Panels A and C show aggregate Origin-Destination demand per cell for Xi'an (A) and Amsterdam (C). Panels B and D show group membership of each cell based on house price index quintiles for Xi'an (B) and Amsterdam (D).}
\caption{Two real-world instances of the MO-TNDP environment in Xi'an (China) \cite{wei_city_2020} and Amsterdam (Netherlands). (A) and (C) show the aggregate Origin-Destination Demand per cell (sum of incoming and outgoing flows) for Xi'an (A) and Amsterdam (C). (B) and (D) show the group membership of each cell, based on the house price index quintiles for Xi'an (B) and Amsterdam (D).}
\label{fig:tndp}
\end{center}
\end{figure}

Episodes last a predefined number of steps. The agent traverses the city, connecting grid cells with eight available actions (movement in all directions). At each time step, the agent receives a vectorial reward of dimension $|\boldsymbol{\mathcal{R}}|$, each corresponding to the percent satisfaction of the demand of each group. We formulate it as an MOMDP $\momdp = \langle \states, \actions, \probtransitions, \vecrewards, \gamma \rangle$, where $\mathcal{S}$ is the current location of the agent, $\mathcal{A}$ is the next direction of movement, and $\boldsymbol{\mathcal{R}}: \mathcal{S} \times \mathcal{A} \times \mathcal{S} \rightarrow \mathbb{R}^d$ is the additional demand satisfied by taking the last action for each group.

Given the discrete and episodic nature of this particular problem, we set the discount factor $\gamma$ to 1. The transition function $\probtransitions$ is deterministic, and each episode starts in the same state.

Additional directional constraints can be imposed on the agent's action space. The environment code enables developers to modify the city object, incorporating adjustments to grid size, OD matrix, cell group membership, and directional constraints, making it adaptable to any city. It supports both creating transport networks from scratch and expanding existing ones. MO-TNDP is available online\footnote{Githup repository: \url{https://github.com/dimichai/mo-tndp}}. 

\section{Experiments}
\label{sec:experiment_setup}
To evaluate our methods and compare them against baselines, we conduct experiments in two environments. One of these is the \textit{Deep Sea Treasure} environment --- a widely used benchmark in Multi-Objective Reinforcement Learning (MORL) \cite{vamplew_empirical_2011}. In this environment, an agent pilots a submarine through a grid-based ocean map, aiming to collect treasures located at various depths. The agent must navigate a trade-off between two conflicting objectives: maximizing treasure value and minimizing fuel consumption. Movement consumes fuel, while treasures located deeper in the sea yield higher rewards. This setup creates a trade-off between short, low-value paths and longer, more rewarding ones. Notably, because the time objective is modeled as a negative reward (i.e., a cost), only the baseline Linear Combination of Normals (LCN) method is applicable in our experiments.

In MO-TNDP, which we described in \Cref{sec:motndp}, we run experiments on two cities: Xi'an in China (841 cells, 20 episode steps) and Amsterdam in the Netherlands (1645 cells, 10 episode steps). Group membership for cells is determined by the average house price, which is divided into 2-10 equally sized buckets. \Cref{fig:tndp} illustrates two instances of the MO-TNDP environment for five objectives (a map of group membership for all objectives is provided in the supplementary material). LCNs are built using the MORL-Baselines library and the code is attached as supplementary material \cite{felten_toolkit_2023}.

Through a Bayesian hyperparameter search of 100 runs, we tuned the batch size, learning rate, ER buffer size, number of layers, and hidden dimension across all reported models, environments, and objective dimensions (details in the supplementary material). We compare LCN with two state-of-the-art multi-policy baselines: PCN \cite{reymond_pareto_2022} and GPI-LS \cite{alegre_gpi}, on widely used MO evaluation metrics.  To fairly compare them, we trained all algorithms for a maximum of $30,000$ steps.

\subsection{Evaluation Metrics}
\label{subsec:eval_metrics}
We use three common Multi-Objective (MO) evaluation metrics to compare our method with the baselines: one axiomatic metric (hypervolume) and two utility-based metrics (expected utility and Sen welfare). Axiomatic metrics evaluate the quality of a solution set by assuming that the true Pareto front represents the optimal solution. They do so without requiring any knowledge of the decision-maker’s preferences over objectives, instead focusing purely on the Pareto-optimality and diversity of the solutions \cite{hayes_practical_2022}. In contrast, utility-based metrics assume that the decision-maker has preferences over the objectives. These metrics either encode a specific utility function or assume a distribution over (or class of \cite{zintgraf2015quality}) possible utility functions to evaluate the solutions \cite{hayes_practical_2022}.

\textbf{Hypervolume}: an axiomatic metric that measures the volume of a set of points relative to a specific reference point and is maximized for the Pareto front. In general, it assesses the quality of a set of non-dominated solutions, its diversity, and spread \cite{hayes_practical_2022}. It's defined as:

\begin{equation}
    \texttt{HV}(CS, \vv_{ref}) = \texttt{Volume} \left(\bigcup_{\pi \in CS}\left[\vv_{ref}, \vv^\pi \right] \right),
\end{equation}
where $CS$ is the set of non-dominated policies, $\texttt{Volume}(\cdot)$ computes the Lebesgue measure of the input space, and $[\vv_{ref}, \vv^\pi]$ is the box spanned by the reference and policy value.

\textbf{Expected Utility Metric (EUM)}: a utility-based metric that measures the expected utility of a given set of solutions, under some assumed distribution of utility functions. Since the true utility function of the decision-maker is unknown, we sample utility functions from a prior distribution and evaluate, for each sampled utility, the maximal score within the solutions set. The EUM score is then computed as the average utility across all sampled decision-makers. Specifically, in this paper, we assume that utilities can be expressed as linear combinations of the objective values, i.e., $ u(\vv^\pi) = w^\top \vv^\pi$, where $w$ is a non-negative weight vector representing a particular preference trade-off among objectives. For each weight vector $w$:  $w_i \ge 0$ and $\sum_i w_i = 1$, a standard weighted-sum multi-objective utility function. We sample a set of 50 well-spaced weight vectors via the Riesz s-Energy method, which ensures diversity by spreading the samples as evenly as possible over the space of valid utility weights (see \cite{ref_dirs_energy} for details). EUM measures the actual expected utility of the policies, and is more interpretable compared to axiomatic approaches like the hypervolume \cite{hayes_practical_2022}. The score is defined as:
\begin{equation}
    \texttt{EUM}(CS) = \mathbb{E}_{P_u} \left[ \max_{\pi \in CS} u(\vv^\pi) \right],
\end{equation}
where $CS$ is the set of non-dominated policies,  $P_u$ is a distribution over utility functions (we use 100 equidistant weight vectors as was done in \cite{alegre_gpi}) and $\max_{\pi \in CS} u(v^\pi)$ is the value of the best policy in the $CS$, according to the utility function $u$, defined by the sampled weights.

\textbf{Sen Welfare}: this metric is based on a welfare function, inspired by Amartya Sen's social welfare theory. It integrates total efficiency and equality into a unified metric. Equality is quantified using the Gini coefficient, a widely used statistical measure of inequality originally developed to assess income or wealth distribution within a population. It is derived from the Lorenz curve, which plots the cumulative proportion of total reward (or resource) received by the bottom fraction of groups, ordered from the least to the most rewarded.

The Gini coefficient is defined as the area between the Lorenz curve and the line of perfect equality (a 45-degree line), normalized by the total area under the line of perfect equality. It ranges from 0 to 1, where 0 indicates perfect equality (all groups receive equal rewards), and 1 indicates maximal inequality (all rewards are concentrated in a single group) \cite{sen_poverty_1976}.

Formally, the Sen Welfare score for a policy $\pi$ is computed as:

\begin{equation}
    \texttt{SW}(\pi) = \bigg(\sum_{i}\vv^\pi_i\bigg) (1 - \texttt{GI}(\vv^\pi)),
\end{equation}
where $\sum_{i}\boldsymbol{\vv^\pi}_i$ is the sum of the returns of all objectives in policy $\pi$, and $\texttt{GI}(\vv^\pi)$ is the Gini coefficient of the return vector $\vv^\pi$. We use this metric for comparative purposes, reflecting a balanced scenario where both efficiency and equality are considered. Sen Welfare has been utilized in economic simulations employing RL before \cite{zheng_ai_2022}. A higher Sen Welfare value signifies increased efficiency and equity.

\subsection{Results}
We use the \textit{Deep Sea Treasure} environment as a sanity check to verify that our methods can effectively optimize in a simple, small-scale setting. In \Cref{tbl:dst_results}, we show that LCN performs on par with PCN on the \textit{Deep Sea Treasure} environment. We next focus on the large-scale MO-TNDP environment. The results discussed here are based on five independently seeded runs. \Cref{fig:boxplots} (a) presents a comparison between PCN, GPI-LS and LCN across all objectives in the MO-TNDP-Xi'an environment, and \Cref{fig:boxplots} (b) compares the vanilla LCN to the reference point alternatives.

\begin{figure}%
    \centering
    \subfloat[\centering]{{\includegraphics[width=7cm]{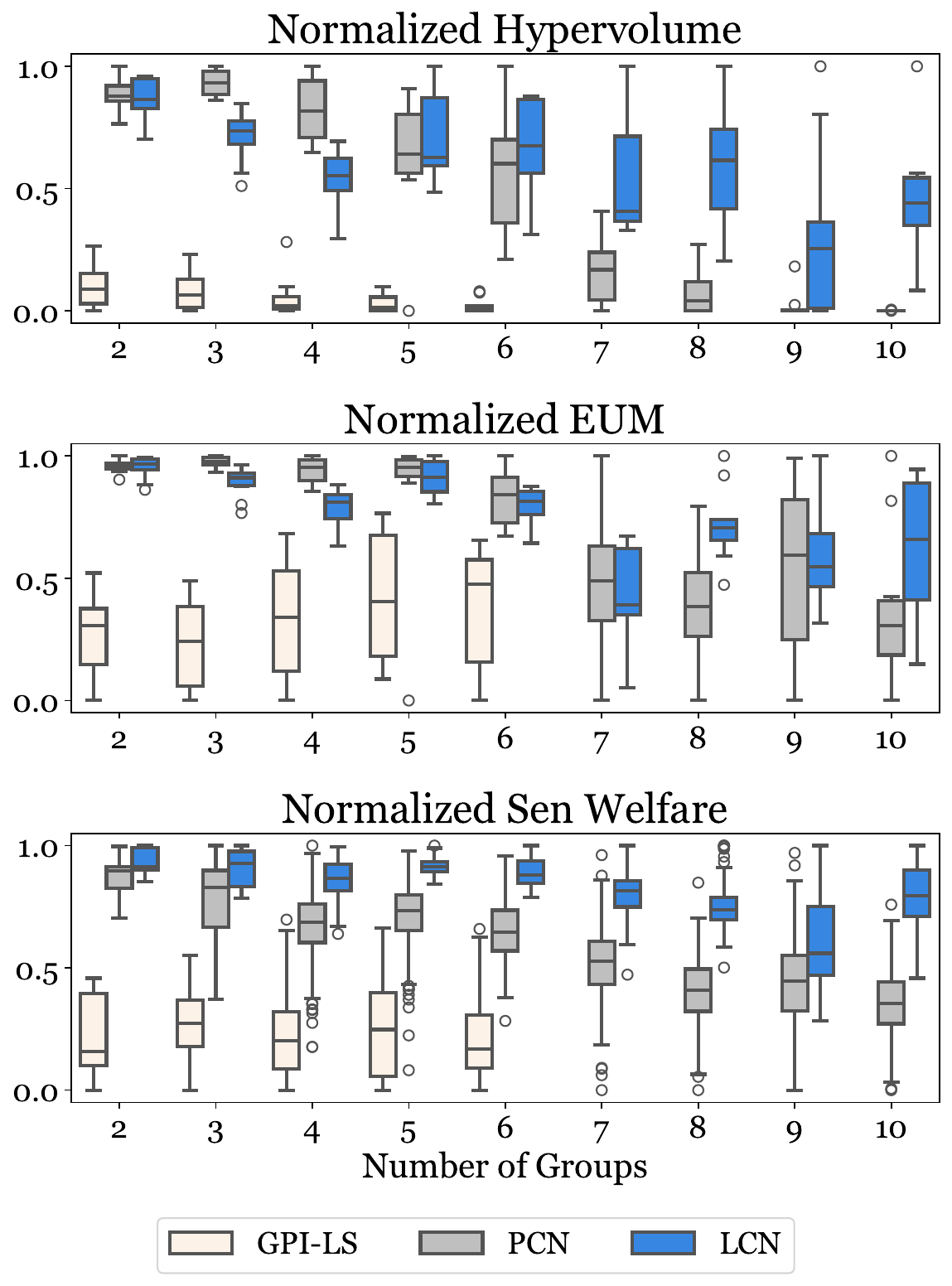} }}%
    \qquad
    \subfloat[\centering]{{\includegraphics[width=7cm]{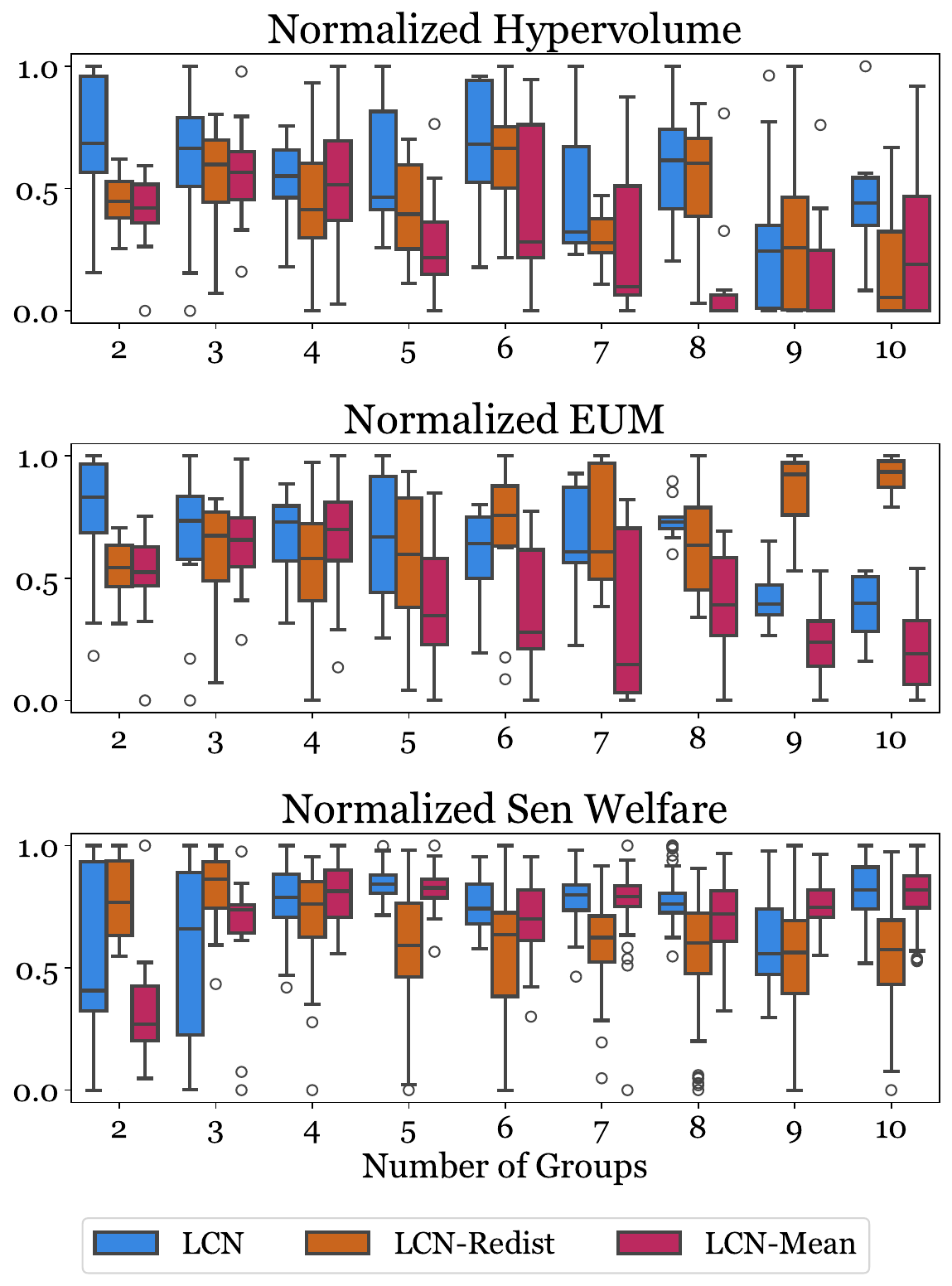} }}%
    \Description{Two-panel figure: (a) Performance comparison of LCN, PCN, and GPI-LS across all objectives in the Sen Welfare measure for Xi'an, showing that LCN outperforms PCN and GPI-LS, and also scales better in hypervolume and EUM as the number of objectives increases. (b) Comparison of trained policies for the proposed LCN, LCN-Redist, and LCN-Mean models.}
    \caption{(a) LCN outperforms PCN and GPI-LS across all objectives in the Sen Welfare measure (Xi'an). Additionally, LCN outperforms PCN in hypervolume when the number of objectives $> 4$ and in EUM for objectives $> 6$, showcasing its scalability over the objective space. (b) A comparison of the trained policies of the proposed LCN, LCN-Redist and LCN-Mean models.}%
    \label{fig:boxplots}%
\end{figure}

\begin{table}[]
\caption{Results on the \textit{Deep Sea Treasure} environment.}
\centering
\begin{tabular}{llll}
 & \multicolumn{1}{c}{\textbf{GPI-LS}} & \multicolumn{1}{c}{\textbf{PCN}} & \multicolumn{1}{c}{\textbf{LCN}} \\ \hline
\multicolumn{1}{|l|}{\texttt{HV}} & \multicolumn{1}{|l|}{$22622.8 \pm 55.4$} & $22845.4 \pm 9.6$ & \multicolumn{1}{l|}{$22838.0 \pm 0.0$} \\ \hline
\multicolumn{1}{|l|}{\texttt{EUM}} & \multicolumn{1}{|l|}{$53.86 \pm 0.02$} & $53.82 \pm 0.02$ & \multicolumn{1}{l|}{$53.76 \pm 0.03$} \\ \hline
\end{tabular}
\label{tbl:dst_results}
\end{table}

\subsection{LCN Outperforms PCN on Many-Objective Settings}
As shown in \Cref{fig:boxplots} (a), GPI-LS exhibits significantly lower performance across all objectives compared to PCN and LCN. (We limited GPI-LS to up to six objectives due to rapidly increasing runtime.) This performance gap is primarily due to the combination of the high-dimensional state-action-reward space and the limited number of time steps. The original experiments on larger domains were conducted for over $200$k steps; to ensure a fair comparison, we restricted training to $30$k steps.

We focus the remainder of the discussion on comparing PCN and LCN. PCN demonstrates strong performance on hypervolume and EUM metrics in environments with 2 to 6 objectives. This is expected, since PCN is designed to learn diverse, non-Pareto-dominated solutions that directly maximize these metrics. However, as the number of objectives increases, LCN begins to outperform PCN, even on hypervolume and EUM. This suggests that LCN is capable of learning a diverse set of policies that cover a broad region of the solution space, despite primarily targeting fair solutions.

Although this result may appear counterintuitive at first, it can be explained by the fact that the set of non-Pareto-dominated solutions grows rapidly with the number of objectives, making supervised learning significantly more difficult for PCN. Notably, when the number of objectives exceeds seven, PCN’s hypervolume performance collapses. In contrast, LCN continues to scale effectively, benefiting from the relatively smaller size of the non-Lorenz-dominated set.

LCN consistently outperforms PCN in Sen Welfare across all objectives. The Sen Welfare metric, which promotes solutions balancing efficiency and equality, shows that LCN excels in generating effective policies even when the solution space is constrained. In particular, LCN maintains its superior performance relative to PCN even as the number of objectives increases.

\begin{figure}[ht]
    \centerline{\includegraphics[width=\columnwidth]{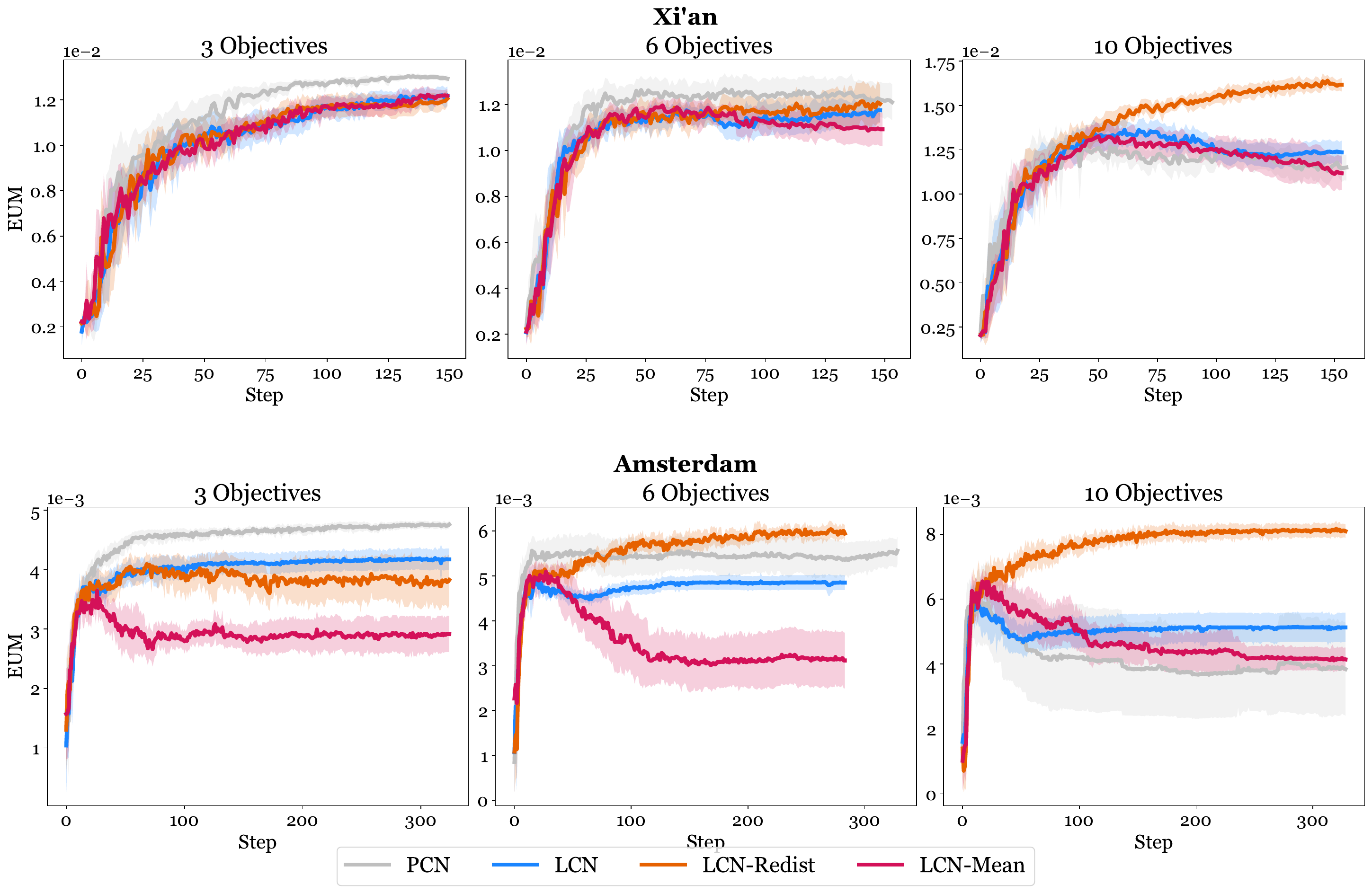}}
    \Description{Learning curves showing EUM performance for 3 and 10 objectives.}
    \caption{Learning Curves for EUM on 3 and 10 objectives (curves for all objectives are in the supplementary material).}
    \label{fig:eum_3_10}
\end{figure}

\begin{figure}[ht]
    \centerline{\includegraphics[width=\columnwidth]{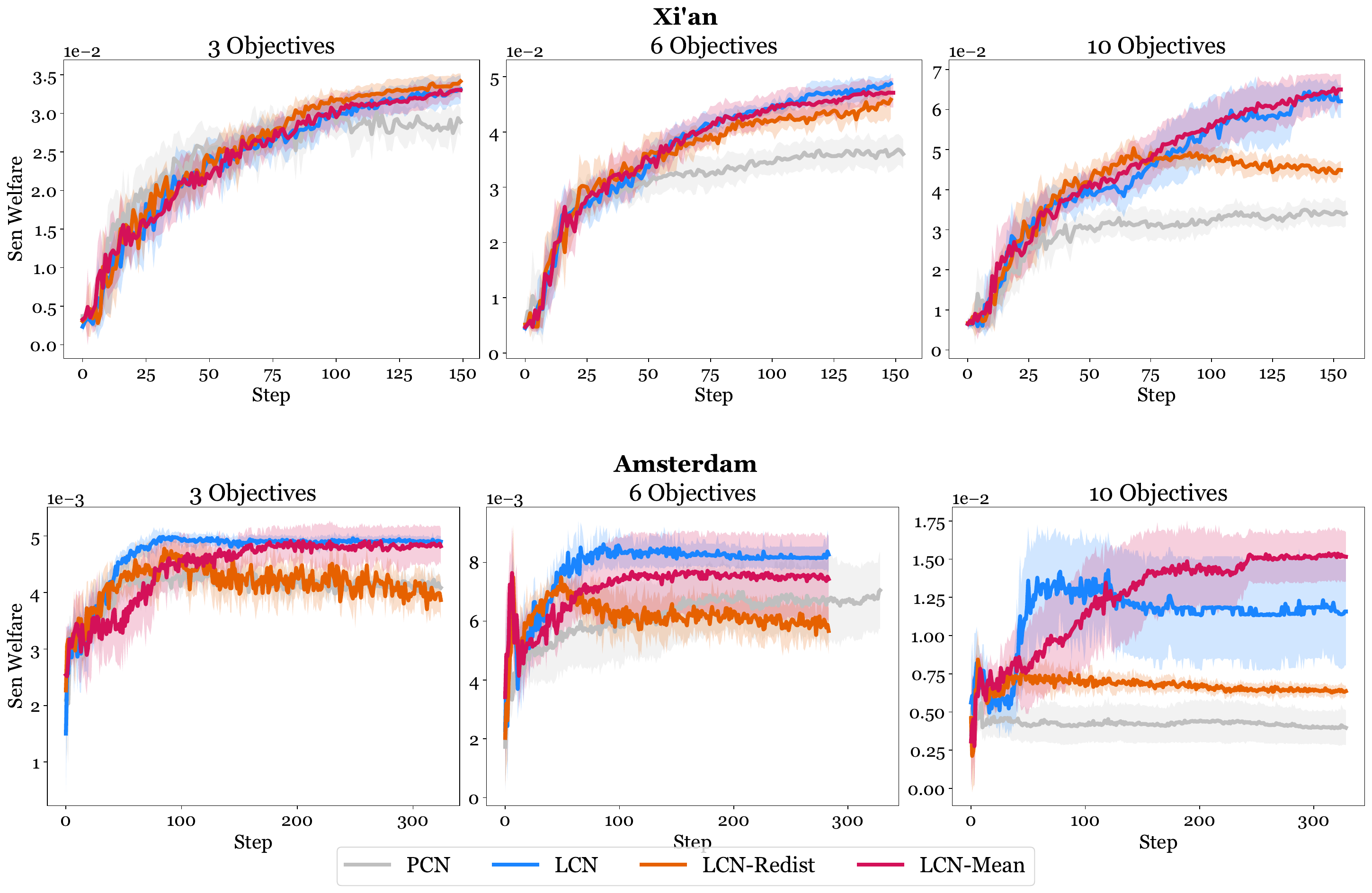}}
    \Description{Learning curves showing Sen Welfare performance for 3 and 10 objectives.}
    \caption{Learning Curves for Sen Welfare on 3 and 10 objectives (curves for all objectives are in the supplementary material).}
    \label{fig:sw_3_10}
\end{figure}

\subsection{Reference Points Improve Training}
In \Cref{fig:boxplots} (b), we compare the reference point mechanisms (LCN-Redist and LCN-Mean) with vanilla LCN. While the axiomatic hypervolume metric shows minimal impact of reference points on the model’s performance, the utility-based metrics, particularly EUM and Sen Welfare, reveal a different story. Vanilla LCN performs well when the objective space is limited. However, as the number of objectives increases, introducing reference points improves stability and outperforms using raw distances from non-dominated points.

Specifically, LCN-Redist demonstrates superior performance in EUM when $d > 5$. Although this seems counterintuitive, the redistribution mechanism creates a reference point with equal objectives. This ensures that all objectives are represented, even if they were absent in the original vector. This approach promotes policies that achieve balanced trade-offs across the solution space. The effectiveness of LCN-Redist is further illustrated by the learning curves in \Cref{fig:eum_3_10}.

Conversely, LCN-Mean performs the worst in EUM as the number of objectives increases. This outcome can be attributed to the potential skewing of the mean vector by large disparities among dimensions. If a group is consistently underrepresented in the collected experiences, its mean dimension will be close to zero, negatively affecting EUM. However, LCN-Mean performs great in Sen Welfare across all objectives, offering better stability than LCN-Redist. This result is expected, as LCN is designed to maximize Sen Welfare. LCN-Mean effectively balances outliers and creates reference points that balance efficiency and equality with minimal intervention. The learning curves for Sen Welfare are presented in \Cref{fig:sw_3_10}.

In \Cref{fig:cardinality}, we examine how the cardinality of the final, non-dominated policy sets evolves as the number of objectives increases (in the Xi'an Environment). PCN exhibits steep growth: it offers fewer than ten policies for up to three objectives, but this number rises above sixty for ten objectives. This shows, empirically, the complexity discussed in \Cref{subsec:filtering_experiences}, i.e., that as the number of objectives grows, the PCN policy set rapidly becomes large. In contrast, LCN and LCN-Mean maintain relatively stable cardinalities across all objective counts, with the trained policy set size remaining below ten throughout. This reflects the regularizing effect of reference points in constraining the solution space. LCN-Redist displays an interesting behavior: its cardinality remains low for up to six objectives; above six objectives it LCN-Redist's cardinality increases, approaching the size of PCN’s set of non-dominated policies. While the redistribution mechanism curbs the growth of the ER buffer in fewer objective settings, as dimensionality increases, the filtering retains more diverse experiences, which also explains why EUM remains high for LCN-Redist.

Overall, our experiments suggest the following guidance for decision-makers: if they care about obtaining many diverse policies that cover a wide range of fair trade-offs, they should choose LCN-Redist. However, if they prioritize strict fairness and are comfortable with a smaller set of policies, LCN is preferable for problems with fewer objectives, while LCN-Mean offers stable policies for higher-dimensional objective problems.

\begin{figure}[ht]
    \centerline{\includegraphics[width=\columnwidth/2]{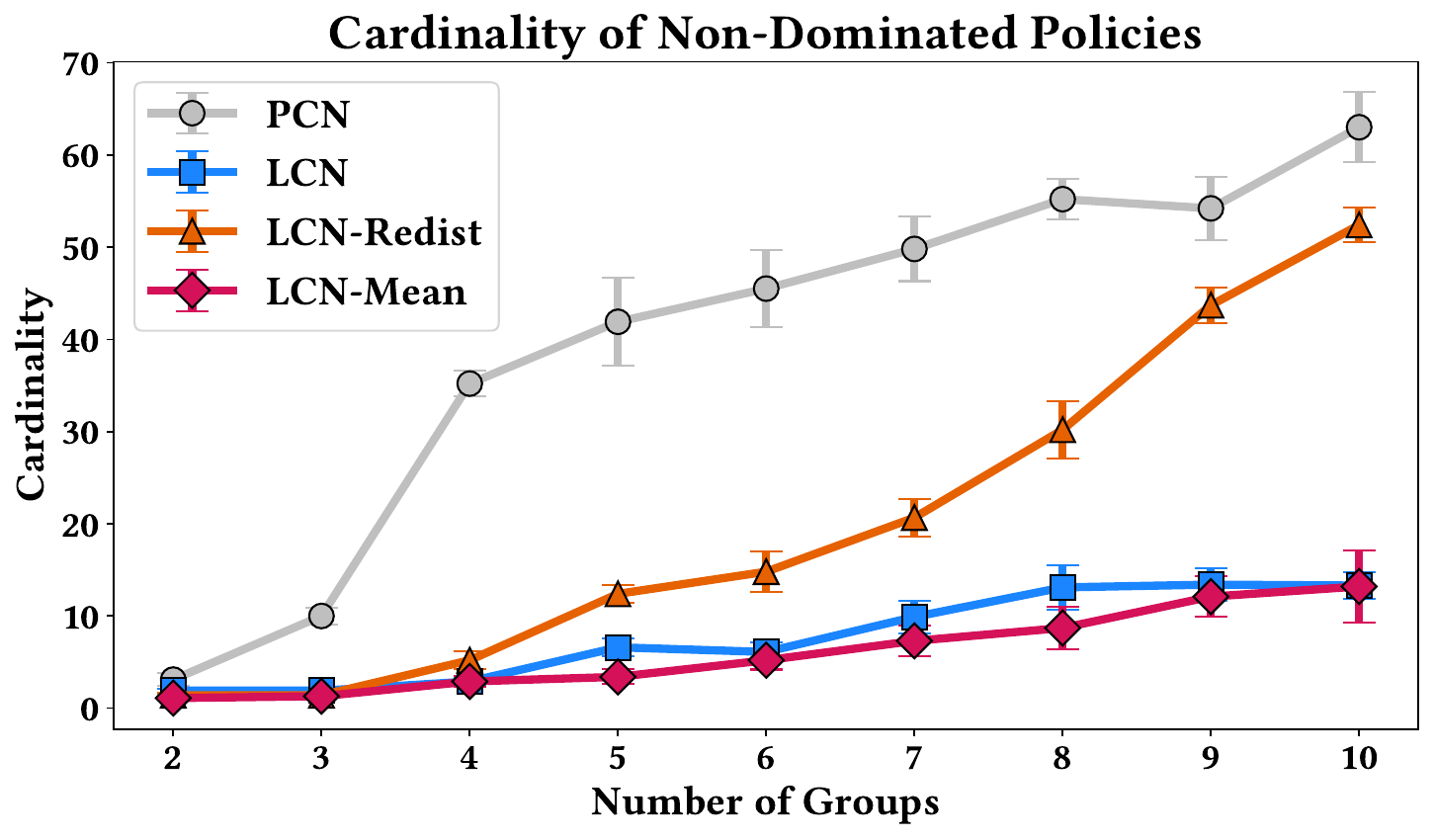}}
    \Description{A line chart showing how the size of non-dominated policy sets changes with the number of objectives. The PCN line rises steeply as objectives increase, while LCN and LCN-Mean remain relatively stable. The LCN-Redist line stays low until six objectives, then starts increasing, approaching the trend of PCN.}
    \caption{Cardinality of the non-dominated policy sets after training, as the number of objectives increases (Xi'an environment). PCN grows rapidly, while LCN and LCN-Mean remain manageable throughout. LCN-Redist stays low up to six objectives before increasing similarly to PCN.}
    \label{fig:cardinality}
\end{figure}

\begin{figure}[ht]
    \centerline{\includegraphics[width=5.2in]{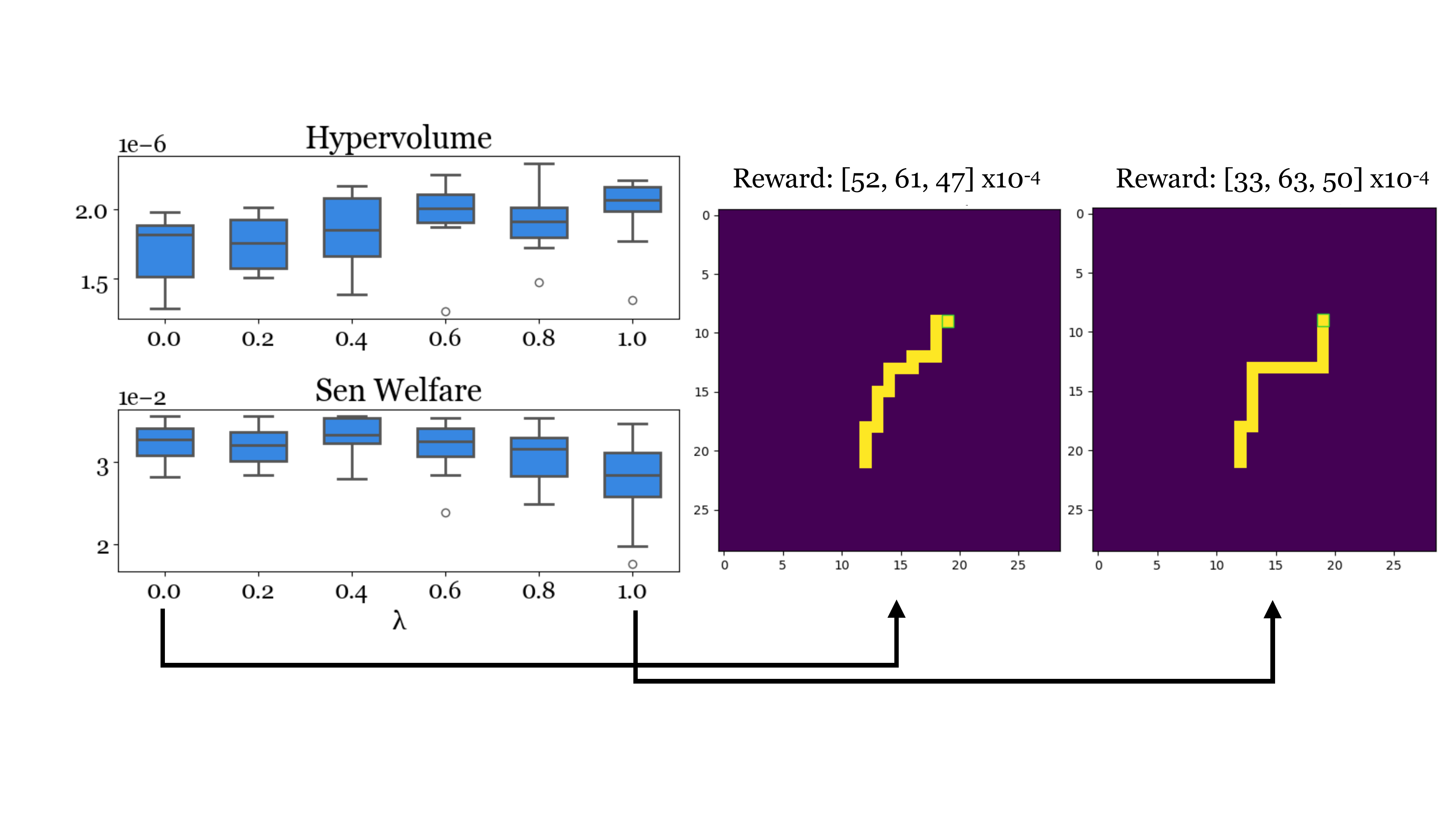}}
    \Description{A plot showing the effect of varying lambda on reward distribution. Lambda ranges from 0, emphasizing fair distribution, to 1, allowing less fair alternatives.}
    \caption{$\lambda$-LCN offers flexibility between emphasizing fair distribution of rewards ($\lambda=0$) and a relaxation that allows less fair alternatives ($\lambda=1$).}
    \label{fig:lcn_lambda_results}
\end{figure}

\begin{figure}[h]
     \centerline{\includegraphics[width=4in]{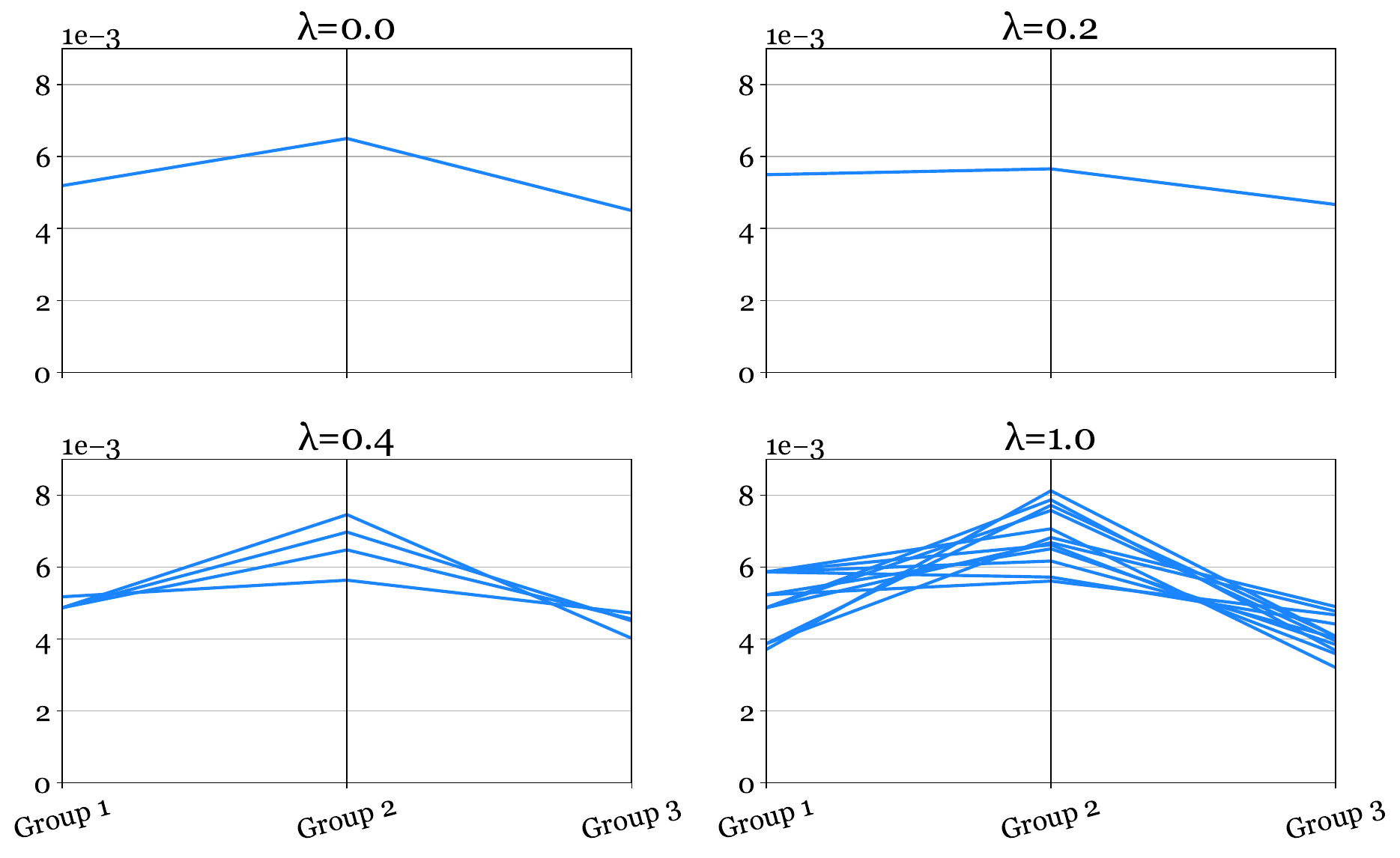}}
    \Description{Parallel coordinate plots of the approximated Pareto front for different values of lambda. As lambda increases, solutions become less constrained by fairness, resulting in a larger coverage set.}
    \caption{Parallel coordinate plots of the approximated front for varying values of $\lambda$. As $\lambda$ increases, the solutions become less constrained by fairness, resulting in a larger coverage set.}
    \label{fig:lambda_pc_xian}
\end{figure}

\subsection{\texorpdfstring{$\lambda$}{lambda}-LCN Can be Used to Achieve Control over the Degree of Fairness Preference}

In \Cref{fig:lcn_lambda_results}, we illustrate the flexibility of $\lambda$-LCN in achieving diverse solutions across different fairness preference degrees. When $\lambda \approx 0$, there is little flexibility, as the goal is to find the most equally distributed policies. This results in high performance for Sen Welfare but, as expected, lower performance in terms of hypervolume due to the concentrated solutions, which cover a smaller area. On the other hand, when $\lambda \approx 1$, the space of accepted solutions expands, leading to a larger hypervolume. However, by accepting less fair solutions, the overall Sen Welfare decreases, adding an extra layer of complexity to the decision-making process. On the right side of the figure, we also show two transport lines originating from the same cell, one for $\lambda=0$ and the other for $\lambda=1$. While the lines follow a similar direction, the placement of stations and the areas they traverse can differ substantially based on the degree of fairness preference.

In \Cref{fig:lambda_pc_xian}, we present parallel coordinate plots of the learned coverage sets for various values of $\lambda$. Here, too, we demonstrate that as $\lambda$ increases, the flexibility for distributing rewards less fairly among different groups also increases. This leads to an expanded coverage set, providing the decision maker with more policy trade-offs.

\section{Conclusion}
We addressed key challenges in multi-objective, multi-policy reinforcement learning by proposing methods that perform well over large state-action and objective spaces. We developed a new multi-objective environment for simulating Public Transport Network Design, thereby enhancing the applicability of MORL to real-world scenarios. We proposed LCN, an adaptation of state-of-the-art methods that outperforms baselines in high-reward dimensions. Finally, we present an effective method for controlling the fairness constraint. These contributions move the research field toward more realistic and applicable solutions in real-world contexts, thereby advancing the state-of-the-art in algorithmic fairness in sequential decision-making and MORL.

\begin{acks}
This research was in part supported by the European Union’s Horizon Europe research and innovation program under grant agreement No 101120406 (PEER). DM is supported by the Innovation Center for AI (ICAI, The Netherlands). WR is supported by the Research Foundation – Flanders (FWO), grant number 1197622N. F. P. Santos acknowledges funding by the European Union (ERC, RE-LINK, 101116987).
\end{acks}

\printbibliography

\appendix
\section{Theoretical Results}
\label{sec:app_proofs}
In this section, we provide the detailed theoretical results for $\lambda$-LCN that we sketched in the main paper. In particular, we show that $\lambda$-Lorenz dominance is a generalization of Lorenz dominance that can be used to flexibly set a desired fairness level. Moreover, by decreasing $\lambda$, the resulting solution set moves monotonically closer to the Lorenz front. 

We first introduce a necessary auxiliary result in \cref{lemma:lld-implies-ld} which shows that $\lambda$-Lorenz dominance implies Lorenz dominance.

\begin{lemma}
\label{lemma:lld-implies-ld}
$\forall \lambda \in [0, 1]$ and $\forall \vv, \vv' \in \mathbb{R}^d$,
\begin{equation}
    \vv \lld \vv' \implies \vv \ld \vv'. 
\end{equation}
\end{lemma}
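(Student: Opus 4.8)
The plan is to reduce both dominance relations to statements about the single difference vector, exploiting the fact that the Lorenz vector is precisely the vector of partial sums of the sorted vector. Write $\vx = L(\vv) - L(\vv') \in \mathbb{R}^d$ and adopt the convention $\vx_0 = 0$. Since the $k$-th coordinate of $L(\cdot)$ is the sum of the $k$ smallest coordinates of its argument, we have $\sigma(\vv)_k = L(\vv)_k - L(\vv)_{k-1}$ for every $k$, and likewise for $\vv'$; hence the $k$-th coordinate of $\sigma(\vv) - \sigma(\vv')$ equals $\vx_k - \vx_{k-1}$. Substituting this into \Cref{def:lld}, the hypothesis $\vv \lld \vv'$ says exactly that the vector with coordinates $\lambda(\vx_k - \vx_{k-1}) + (1-\lambda)\vx_k = \vx_k - \lambda\vx_{k-1}$ Pareto dominates $\vzero$, i.e. $\vx_k \geq \lambda\vx_{k-1}$ for all $k$ with strict inequality for at least one $k$. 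The goal $\vv \ld \vv'$ is the statement $\vx \succ_P \vzero$, i.e. $\vx_k \geq 0$ for all $k$ with at least one strict.

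First I would establish the componentwise inequalities by induction on $k$. For $k = 1$, $\vx_1 \geq \lambda\vx_0 = 0$. For the inductive step, if $\vx_{k-1} \geq 0$ then, since $\lambda \geq 0$, $\vx_k \geq \lambda\vx_{k-1} \geq 0$. Then I would handle the strictness clause: suppose for contradiction that $\vx_k = 0$ for all $k$; then $\vx_k - \lambda\vx_{k-1} = 0$ for all $k$, contradicting the fact that this quantity is strictly positive for some $k$. Hence some $\vx_j > 0$, so $L(\vv) \succ_P L(\vv')$, which is exactly $\vv \ld \vv'$.

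I do not expect a serious obstacle here; the only point requiring care is the bookkeeping that converts $\sigma$ and $L$ into one another — the telescoping identity $\sigma(\vv)_k = L(\vv)_k - L(\vv)_{k-1}$ with the empty-sum convention — and carrying the non-equality clause of $\succ_P$ correctly through the argument rather than just the weak inequalities. An equivalent, slightly more conceptual route would be to observe that $\lambda\sigma(\vv) + (1-\lambda)L(\vv) = A_\lambda L(\vv)$, where $A_\lambda$ is the lower-bidiagonal matrix with $1$ on the diagonal and $-\lambda$ on the subdiagonal; since $\lambda \in [0,1]$, the entries of $A_\lambda^{-1}$ are $\lambda^{i-j} \geq 0$ for $i \geq j$, so $A_\lambda L(\vv) - A_\lambda L(\vv') \geq \vzero$ implies $L(\vv) - L(\vv') = A_\lambda^{-1}(A_\lambda L(\vv) - A_\lambda L(\vv')) \geq \vzero$, while invertibility of $A_\lambda$ preserves non-equality. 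I would present the elementary induction, as it is the cleanest to write out and keeps the $\succ_P$ strictness explicit.
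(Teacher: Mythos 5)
Your proof is correct and follows essentially the same route as the paper's: both rest on the telescoping identity $\sigma(\vv)_k = L(\vv)_k - L(\vv)_{k-1}$ and a coordinatewise comparison of the combined vectors, yours organized as a direct induction on the partial-sum differences $\vx_k = L(\vv)_k - L(\vv')_k$, the paper's as a contradiction at the smallest index where $L(\vv)_k < L(\vv')_k$. If anything, your write-up is slightly more complete, since you explicitly carry the non-equality clause of $\succ_P$ through the argument, a point the paper's contradiction argument leaves implicit.
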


\begin{proof}
By contradiction, assume there is some $\vv, \vv'$ and $\lambda$ such that $\vv \lld \vv'$ but $\vv$ does not Lorenz dominate $\vv'$. Then there is some smallest index $k$ such that $L(\vv)_k < L(\vv')_k$. Since $\sigma(\vv)_1 = L(\vv)_1$ and $\vv \lld \vv'$, we know that $\sigma(\vv)_1 \geq \sigma(\vv')_1$. Then for all indices $i \in \{1, \dotsc, k-1\}$ we have $\sum_{j=1}^i \sigma(\vv)_j \geq \sigma(\vv')_j$. Given that $\sum_{j=1}^k \sigma(\vv)_j < \sum_{j=0}^k \sigma(\vv')_j$, we have that $\sigma(\vv)_k < \sigma(\vv')_k$. However, this implies that $\forall \lambda \in [0, 1], \lambda \sigma(\vv)_k + (1- \lambda) L(\vv)_k < \sigma(\vv')_k + (1- \lambda) L(\vv')_k$ and therefore $\vv$ does not $\lambda$-Lorenz dominate $\vv'$ leading to a contradiction.
\end{proof}

In \cref{th:lld1-implies-lld2} we use \cref{lemma:lld-implies-ld} to demonstrate that when some vector $\lambda$-Lorenz dominates, it necessarily also dominates it for any smaller $\lambda$. 

\begin{theorem}
\label{th:lld1-implies-lld2}
$\forall \lambda_1, \lambda_2: 0 \leq \lambda_1 \leq \lambda_2 \leq 1$ and $\forall \vv, \vv' \in \mathbb{R}^d$,
\begin{equation}
    \vv \lld[2] \vv' \implies \vv \lld[1] \vv'. 
\end{equation}
\end{theorem}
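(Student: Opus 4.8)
The plan is to reduce the claim to a coordinatewise comparison by rewriting the map $\vv \mapsto \lambda\sigma(\vv) + (1-\lambda)L(\vv)$ purely in terms of the Lorenz vector. Setting $L(\vv)_0 = 0$ and using the identity $\sigma(\vv)_k = L(\vv)_k - L(\vv)_{k-1}$, the $k$-th coordinate of $\lambda\sigma(\vv) + (1-\lambda)L(\vv)$ equals $L(\vv)_k - \lambda L(\vv)_{k-1}$. Hence, writing $\Delta_k = L(\vv)_k - L(\vv')_k$ with $\Delta_0 = 0$, the relation $\vv \lld \vv'$ is equivalent to: $\Delta_k - \lambda \Delta_{k-1} \geq 0$ for all $k \in \{1,\dots,d\}$, with strict inequality at some index. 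Establishing this reformulation (and checking the boundary index $k=1$, where $\lambda\Delta_0$ vanishes so the comparison is $\lambda$-independent) is the one piece of genuine bookkeeping.

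Next I would invoke \cref{lemma:lld-implies-ld}: from $\vv \lld[2] \vv'$ we obtain $\vv \ld \vv'$, i.e.\ $L(\vv)$ Pareto dominates $L(\vv')$, so $\Delta_k \geq 0$ for every $k$ and $L(\vv)\neq L(\vv')$. This non-negativity is the crux. For an arbitrary index $k$, since $\Delta_{k-1} \geq 0$ and $0 \leq \lambda_1 \leq \lambda_2$, we have $\lambda_1 \Delta_{k-1} \leq \lambda_2 \Delta_{k-1}$, whence $\Delta_k - \lambda_1 \Delta_{k-1} \geq \Delta_k - \lambda_2 \Delta_{k-1} \geq 0$, where the last inequality is the weak part of $\vv \lld[2] \vv'$. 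This yields the coordinatewise $\geq$ part of $\vv \lld[1] \vv'$.

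For strictness, fix an index $m$ at which $\vv \lld[2] \vv'$ is strict, i.e.\ $\Delta_m - \lambda_2 \Delta_{m-1} > 0$; the same chain of inequalities with $\lambda_1$ in place of $\lambda_2$ gives $\Delta_m - \lambda_1 \Delta_{m-1} > 0$, so the Pareto dominance at parameter $\lambda_1$ is strict at coordinate $m$ as well. Combining the weak and strict parts gives $\lambda_1\sigma(\vv) + (1-\lambda_1)L(\vv) \succ_P \lambda_1\sigma(\vv') + (1-\lambda_1)L(\vv')$, i.e.\ $\vv \lld[1] \vv'$, as desired.

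The only real subtlety — and precisely why \cref{lemma:lld-implies-ld} is needed as a prerequisite — is the sign of $\Delta_{k-1}$: the step $\lambda_1 \Delta_{k-1} \leq \lambda_2 \Delta_{k-1}$ requires $\Delta_{k-1}\geq 0$, and if some partial Lorenz sum of $\vv$ could lag behind the corresponding partial sum of $\vv'$, the implication would break down. Once that sign information is in hand, the remainder is elementary.
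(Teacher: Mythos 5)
Your proof is correct and follows the same overall strategy as the paper's: both invoke \cref{lemma:lld-implies-ld} to obtain $L(\vv)_i \geq L(\vv')_i$ for all $i$, and then exploit monotonicity in $\lambda$ coordinatewise. The algebraic execution differs, though, and yours is cleaner in two respects. First, your identity $\lambda\sigma(\vv)_k + (1-\lambda)L(\vv)_k = L(\vv)_k - \lambda L(\vv)_{k-1}$ reduces the comparison to $\Delta_k - \lambda\Delta_{k-1} \geq 0$, so the passage from $\lambda_2$ to $\lambda_1$ is a single subtraction of a smaller nonnegative quantity; the paper instead multiplies the $\lambda_2$-inequality through by $\lambda_1/\lambda_2$, which silently requires $\lambda_2 > 0$ and thus leaves the (trivial, but unaddressed) case $\lambda_1 = \lambda_2 = 0$ outside the argument. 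Second, you explicitly track the strict coordinate $m$ to conclude that the $\lambda_1$-combined vectors are genuinely distinct, whereas the paper's proof only establishes the weak coordinatewise inequalities and leaves the non-equality part of Pareto dominance implicit. Both of these are minor, but your version closes them; the substance of the argument is otherwise identical.
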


\begin{proof}
Let $\vv \lld[2] \vv'$. From Definition 3, this can equivalently be written as $\lambda_2 \sigma(\vv) + (1-\lambda_2) L(\vv) \succ_P \lambda_2 \sigma(\vv') + (1-\lambda_2) L(\vv')$. In addition, from \cref{lemma:lld-implies-ld} we know that since $\vv \lld[2] \vv' \implies \vv \ld \vv'$. Then, for any index $i \in \{1, \dotsc, d\}$ we have that,  

\begin{align}
\lambda_2 \sigma(\vv)_i + (1 - \lambda_2) L(\vv)_i & \geq \lambda_2 \sigma(\vv')_i + (1 - \lambda_2) L(\vv')_i \\
\frac{\lambda_1}{\lambda_2} \lambda_2 \sigma(\vv)_i + \frac{\lambda_1}{\lambda_2}(1 - \lambda_2) L(\vv)_i & \geq \frac{\lambda_1}{\lambda_2} \lambda_2 \sigma(\vv')_i + \frac{\lambda_1}{\lambda_2} (1 - \lambda_2) L(\vv')_i \\
\lambda_1 \sigma(\vv)_i + (\frac{\lambda_1}{\lambda_2} - \lambda_1) L(\vv)_i &\geq \lambda_1 \sigma(\vv')_i  + (\frac{\lambda_1}{\lambda_2} - \lambda_1) L(\vv')_i \\
\lambda_1 \sigma(\vv)_i + (1 - \lambda_1) L(\vv)_i &\geq \lambda_1 \sigma(\vv')_i  + (1 - \lambda_1) L(\vv')_i
\end{align}
where the last step holds since $\lambda_1 \leq \frac{\lambda_1}{\lambda_2} \leq 1$ and $L(\vv)_i \geq L(\vv')_i$ by \cref{lemma:lld-implies-ld}.
\end{proof}

We now contribute an additional auxiliary result which guarantees that when some vector Pareto dominates another, it also $\lambda$-Lorenz dominates the vector. This result is an extension of the fact that Pareto dominance implies Lorenz dominance.

\begin{lemma}
\label{lemma:pd-implies-lld}
$\forall \lambda \in [0, 1]$ and $\forall \vv, \vv' \in \mathbb{R}^d$,
\begin{equation}
    \vv \pd \vv' \implies \vv \lld \vv'. 
\end{equation}
\end{lemma}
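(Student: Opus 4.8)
The plan is to reduce $\lambda$-Lorenz dominance to its defining $d$ scalar inequalities and prove them by combining two componentwise domination facts --- one for the sorted vectors $\sigma(\cdot)$ and one for the Lorenz vectors $L(\cdot)$ --- and then to argue separately that the two convex combinations cannot coincide. Throughout I would write $v'_{(1)} \leq \dots \leq v'_{(d)}$ for the sorted coordinates, so that $\sigma(\vv')_i = v'_{(i)}$.

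First I would invoke the result extending \citeauthor{perny_approximation_2013} (Pareto dominance implies Lorenz dominance) to get $L(\vv)_i \geq L(\vv')_i$ for every $i \in \{1, \dots, d\}$ from $\vv \pd \vv'$. Second, I would prove the companion statement that $\vv \pd \vv'$ implies $\sigma(\vv)_i \geq \sigma(\vv')_i$ for every $i$. Fix $i$. At most $i-1$ coordinates of $\vv'$ are strictly below $v'_{(i)}$ (otherwise the $i$-th smallest would itself be below $v'_{(i)}$); since $v_j \geq v'_j$ for all $j$, any coordinate $v_j < v'_{(i)}$ forces $v'_j \leq v_j < v'_{(i)}$, so at most $i-1$ coordinates of $\vv$ lie below $v'_{(i)}$, whence the $i$-th smallest coordinate of $\vv$ is at least $v'_{(i)}$. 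Because $\lambda \geq 0$ and $1-\lambda \geq 0$, adding $\lambda$ times the first family of inequalities to $(1-\lambda)$ times the second gives $\lambda \sigma(\vv)_i + (1-\lambda) L(\vv)_i \geq \lambda \sigma(\vv')_i + (1-\lambda) L(\vv')_i$ for all $i$.

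It remains to rule out equality of the two combined vectors. Here I would use the strict inequality $\sum_j v_j > \sum_j v'_j$ implied by $\vv \pd \vv'$, so in particular $L(\vv)_d > L(\vv')_d$, and moreover $\sigma(\vv) \neq \sigma(\vv')$ since $\sigma$ merely permutes coordinates (equal sorted vectors would force equal coordinate sums). Suppose for contradiction the combinations agree at every index $i$. Since $\sigma(\vv)_i - \sigma(\vv')_i \geq 0$ and $L(\vv)_i - L(\vv')_i \geq 0$, vanishing of $\lambda(\sigma(\vv)_i - \sigma(\vv')_i) + (1-\lambda)(L(\vv)_i - L(\vv')_i)$ forces $L(\vv)_i = L(\vv')_i$ whenever $\lambda < 1$ and $\sigma(\vv)_i = \sigma(\vv')_i$ whenever $\lambda > 0$. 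For $\lambda < 1$ this contradicts $L(\vv)_d > L(\vv')_d$; for $\lambda = 1$ it contradicts $\sigma(\vv) \neq \sigma(\vv')$. Hence the componentwise inequality is strict at some index, so $\lambda \sigma(\vv) + (1-\lambda) L(\vv) \pd \lambda \sigma(\vv') + (1-\lambda) L(\vv')$, i.e. $\vv \lld \vv'$.

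The convex-combination bookkeeping and the $\lambda = 1$ edge case are routine; the only substantive point is the elementary counting argument showing Pareto dominance is preserved under sorting, which is what makes the $\sigma(\cdot)$ term cooperate. I expect that to be the main (and still mild) obstacle, since the Lorenz part is already handled by the cited result and everything else is arithmetic.
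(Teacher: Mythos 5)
Your proof is correct and follows the same overall decomposition as the paper's: cite the known fact that Pareto dominance implies Lorenz dominance to control the $L(\cdot)$ terms, establish separately that Pareto dominance is preserved by sorting to control the $\sigma(\cdot)$ terms, and take the convex combination. The one place where you genuinely diverge is the sorting step: the paper proves $\vv \pd \vv' \implies \sigma(\vv) \pd \sigma(\vv')$ by induction on the dimension with a case analysis over where the new coordinate lands in the sorted order, whereas you give a direct, dimension-free counting argument (at most $i-1$ coordinates of $\vv'$ lie strictly below $v'_{(i)}$, and $\{j : v_j < v'_{(i)}\} \subseteq \{j : v'_j < v'_{(i)}\}$, so the $i$-th order statistic of $\vv$ is at least $v'_{(i)}$). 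Your version is shorter, avoids the somewhat informal insertion bookkeeping in the paper's induction step, and generalizes with no extra effort. You are also more careful than the paper on strictness: the paper leaves implicit why the combined vectors cannot coincide, while you explicitly derive $L(\vv)_d > L(\vv')_d$ from $\sum_j v_j > \sum_j v'_j$ and handle the $\lambda = 1$ edge case via $\sigma(\vv) \neq \sigma(\vv')$. Both approaches are sound; yours buys a cleaner key lemma and an airtight strict-dominance conclusion, at no cost in length.
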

\begin{proof}
From Definition 3, $\lambda$-Lorenz dominance can be written as $\lambda \sigma(\vv) + (1-\lambda) L(\vv) \succ_P \lambda \sigma(\vv') + (1-\lambda) L(\vv')$. Let us first recall from Theorem 1 of \cite{perny_approximation_2013} that $\vv \pd \vv' \implies \vv \ld \vv'$. It is then necessary to demonstrate an analogous result for  $\vv \pd \vv' \implies \sigma(\vv) \pd \sigma(\vv')$. We prove this by induction.

Let $\vv = (\vv_1, \vv_2)$ and $\vv' = (\vv'_1, \vv'_2)$. By contradiction, assume that $\vv \pd \vv'$ but $\sigma(\vv)$ does not Pareto dominate $\sigma(\vv')$. This implies that there is some index $i$ such that $\sigma(\vv)_{i} < \sigma(\vv')_{i}$. Let us consider the four cases for $\sigma(\vv)$ and $\sigma(\vv')$.

If $\sigma(\vv) = \vv$ and $\sigma(\vv') = \vv'$ this cannot occur. Moreover, if both vectors are in reverse it also cannot be the case. 

When $\sigma(\vv) = (\vv_1, \vv_2)$ and $\sigma(\vv') = (\vv'_2, \vv'_1)$, $\vv'_2$ cannot be greater than $\vv_1$ since by transitivy then also $\vv'_1 > \vv_1$ which is a contradiction. Furthermore, $\vv'_1$ cannot be greater than $\vv_2$ because then $\vv'_1 > \vv_1$ which is again a contradiction. 

The final case, where $\sigma(\vv) = (\vv_2, \vv_1)$ and $\sigma(\vv') = (\vv'_1, \vv'_2)$, gives the same contradictions.

Assuming that the result holds for vectors of dimension $d$, we demonstrate that it must also hold for $d+1$. Let $\vv, \vv' \in \mathbb{R}^d$ and $\vv \pd \vv'$. Consider now $(\vv, a)$ and $(\vv', b)$ which are extensions of the vectors and $(\vv, a) \pd (\vv', b)$. Then $a > b$. By contradiction, assume again that $\sigma(\vv, a)$ does not dominate $\sigma(\vv', b)$. Let $i$ be the smallest index where $\sigma(\vv, a)_i < \sigma(\vv', b)_i$. There are four cases where this may occur. Either this happens when $a$ and $b$ have both not been inserted yet, $a$ has been inserted but not $b$, $b$ has been inserted but not $a$ and both have been inserted. Clearly, when neither or both were inserted, this leads to a contradiction.

If only $b$ was inserted, by transitivity we have that $\sigma(\vv, a)_i < \sigma(\vv', b)_{i+1}$ and therefore $\sigma(\vv)_i < \sigma(\vv')_i$ which is a contradiction. Lastly, if only $a$ was inserted then $\sigma(\vv, a)_i < \sigma(\vv', b)_i$ and $\sigma(\vv', b)_i < b$ implying that $a < b$ and leading to a contradiction.
\end{proof}

Finally, we provide a proof for Theorem 1 of the main paper. This result demonstrates that when $\lambda$ is $1$, the solution set starts closest to the Pareto front and decreasing $\lambda$ to $0$ monotonically reduces the solution set until it results in the Lorenz front. As such, by selecting a $\lambda$, a decision-maker can determine their preferred balance between Pareto optimality and Lorenz fairness. We first restate the theorem below and subsequently provide the proof.

\begin{theorem} (Referred to as Theorem 1 in the main text)
\label{th:front-relations-app}
$\forall \lambda_1, \lambda_2: 0 \leq \lambda_1 \leq \lambda_2 \leq 1$ and $\forall D \subset \mathbb{R}^d$ the following relations hold.
\begin{equation}
    \lf{D} \subseteq \llf{D}{\lambda_1} \subseteq \llf{D}{\lambda_2} \subseteq \pf{D}.
\end{equation}
\end{theorem}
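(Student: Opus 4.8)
The plan is to establish the chain of four set inclusions by proving each one separately, using the three auxiliary results (\Cref{lemma:lld-implies-ld}, \Cref{th:lld1-implies-lld2}, and \Cref{lemma:pd-implies-lld}) together with the general principle that a stronger dominance relation yields a smaller non-dominated set. Concretely, I would first record the following elementary observation: if $\succ_1$ and $\succ_2$ are two (strict) dominance relations on $\mathbb{R}^d$ with $\vv \succ_1 \vv' \implies \vv \succ_2 \vv'$ for all $\vv, \vv'$, then for any set $D$ the set of $\succ_2$-non-dominated elements is contained in the set of $\succ_1$-non-dominated elements. The reasoning is contrapositive: if $\vu \in D$ is $\succ_1$-dominated, say $\vv \succ_1 \vu$ for some $\vv \in D$, then $\vv \succ_2 \vu$, so $\vu$ is $\succ_2$-dominated as well. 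Applying this meta-lemma to each of the implications already proved will immediately yield the inclusions.

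Then I would work through the four inclusions in order. For $\llf{D}{\lambda_1} \subseteq \llf{D}{\lambda_2}$: by \Cref{th:lld1-implies-lld2}, since $\lambda_1 \leq \lambda_2$, we have $\vv \lld[2] \vv' \implies \vv \lld[1] \vv'$, i.e. $\lld[2]$ is the weaker relation; hence its non-dominated set is the larger one, giving $\llf{D}{\lambda_1} \subseteq \llf{D}{\lambda_2}$. Wait --- I need to double-check the direction here, since the implication $\vv \lld[2] \vv' \implies \vv \lld[1] \vv'$ means $\lld[1]$ is the \emph{weaker} relation (it holds in more cases), so by the meta-lemma the $\lld[1]$-non-dominated set is the \emph{smaller} one, which gives exactly $\llf{D}{\lambda_1} \subseteq \llf{D}{\lambda_2}$ as desired. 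For $\lf{D} \subseteq \llf{D}{\lambda_1}$: by \Cref{lemma:lld-implies-ld}, $\vv \lld[1] \vv' \implies \vv \ld \vv'$, so Lorenz dominance $\ld$ is the weaker relation, its non-dominated set $\lf{D}$ is the smaller one, hence $\lf{D} \subseteq \llf{D}{\lambda_1}$. For $\llf{D}{\lambda_2} \subseteq \pf{D}$: by \Cref{lemma:pd-implies-lld}, $\vv \pd \vv' \implies \vv \lld[2] \vv'$, so Pareto dominance is the stronger relation and $\pf{D}$ is the larger non-dominated set, giving $\llf{D}{\lambda_2} \subseteq \pf{D}$. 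Chaining these three inclusions yields the theorem.

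I do not expect any serious obstacle: all the mathematical content has been pushed into the three lemmas/theorems already proved, and what remains is the bookkeeping of the ``stronger relation $\Rightarrow$ smaller non-dominated set'' principle applied three times, with care taken to get the direction of each inclusion right. The only subtlety worth stating carefully is the meta-lemma itself and the consistency of the definition of $\llf{D}{\lambda}$ as the set of \emph{pairwise} non-$\lambda$-Lorenz-dominated vectors --- one should note that the meta-lemma argument only uses domination by another element of $D$, so it applies verbatim to these ``front'' sets. I would phrase the whole proof in three short displayed implications followed by the concatenated inclusion, citing \Cref{lemma:lld-implies-ld}, \Cref{th:lld1-implies-lld2}, and \Cref{lemma:pd-implies-lld} at the appropriate points, and close with the remark that the special cases $\lambda=0$ and $\lambda=1$ (already noted in the main text) follow since $\lld[0]$ coincides with $\ld$ and $\lld[1]$ is strictly refined by $\pd$.
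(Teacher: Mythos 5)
Your proof is correct and follows essentially the same route as the paper: all three inclusions are obtained by applying the ``stronger dominance relation yields a smaller non-dominated set'' principle to \Cref{lemma:lld-implies-ld}, \Cref{th:lld1-implies-lld2}, and \Cref{lemma:pd-implies-lld}. The only cosmetic difference is that for $\lf{D} \subseteq \llf{D}{\lambda_1}$ you invoke \Cref{lemma:lld-implies-ld} directly, whereas the paper uses the identification of $\lf{D}$ with the $\lambda$-Lorenz front at the extreme value of $\lambda$ together with \Cref{th:lld1-implies-lld2}; both are valid, and your explicit statement of the contrapositive meta-lemma is a welcome clarification.
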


\begin{proof}
From \cref{lemma:pd-implies-lld} we are guaranteed that $\vv \succ_P \vv'$ implies $\vv \lld \vv'$ and therefore $\forall \lambda \in [0, 1], \llf{D}{\lambda} \subseteq \pf{D}$. In addition, \cref{th:lld1-implies-lld2} guarantees that $\llf{D}{\lambda_1} \subseteq \llf{D}{\lambda_2}$. Finally, given that $\lf{D} = \llf{D}{1}$ we have that $\forall \lambda \in [0, 1], \lf{D} \subseteq \llf{D}{\lambda}$.
\end{proof}

\section{Preparing the Xi'an and Amsterdam Environments}
\label{sec:app_env_prep}
The MO-TNDP environment released with this paper is adaptable for training an agent in any city, provided there are three elements: grid size (defined by the number of rows and columns), OD matrix, and group membership assigned to each grid cell. The grid size is specified as an argument in the constructor of the environment object, along with the file paths leading to the CSV files containing the OD matrix and group membership data. We have configured the environments for both Xi'an and Amsterdam, and these are included alongside the code for the environment.

\subsubsection*{Xi'an environment preparation}
We generated the Xi'an environment utilizing the data provided in \cite{wei_city_2020}. \footnote{source: https://github.com/weiyu123112/City-Metro-Network-Expansion-with-RL}. 
The city is divided into a grid of dimensions $H^{29\times29}$, with cells of equal size ($1km^2$). 
The OD demand matrix was formulated using GPS data gathered from 25 million mobile phones, with their movements tracked over a one-month period
Additionally, each cell is assigned an average house price index, which is categorized into quintiles. \Cref{fig:xian_groups_full} provides a comprehensive breakdown of the city into various sized groups.

\subsubsection*{Amsterdam environment preparation}
We generate and release the data associated with the Amsterdam environment.
The city is divided into a grid of dimensions $H^{35\times 47}$, consisting of equally sizes cells of $0.5km^2$.
The choice of this cell size takes into consideration Amsterdam's smaller size compared to Xi'an.
Since GPS data is unavailable for Amsterdam, we estimate the OD demand using the recently published universal law of human mobility, which states that the total mobility flow between two areas, denoted as $i$ and $j$, depends on their distance and visitation frequency \cite{schlapfer_universal_2021}. The estimation is computed using the formula:
\begin{equation}
    OD_{ij} = \mu_j \mathsf{K_i} / d^{2}_{ij} \ln(f_{max}/f_{min})
\end{equation}
Where $\mathsf{K_i}$ represents the total area of the origin location $i$, $d^{2}_{ij}$ is the (Manhattan) distance between $i,j$ and $\mu_j$ is the magnitude of flows, calculated as follows:
\begin{equation}
    \mu_j \approx \rho _{pop}(j)rad^2_jf_{max}
\end{equation}
Where $rad^2_j$ is the radius of area j. The flows are estimated for a full week, and in the model, this is accomplished by setting $f_{min}$ and $f_{max}$ to $1/7$ and $7$ respectively. Since the grid cells are of equal size in our case, the term $K$ can be omitted from the calculation. An illustration of the Amsterdam environment is presented in A detailed breakdown of the city into different-sized groups is depicted in \Cref{fig:ams_grps_full}.

Similar to the Xi'an environment, each cell in Amsterdam is associated with an average house price, sourced from the publicly available statistical bureau of the Netherlands dataset \footnote{source: https://www.cbs.nl/nl-nl/maatwerk/2019/31/kerncijfers-wijken-en-buurten-2019}.

\begin{figure}[ht]
    \centerline{\includegraphics[width=\columnwidth]{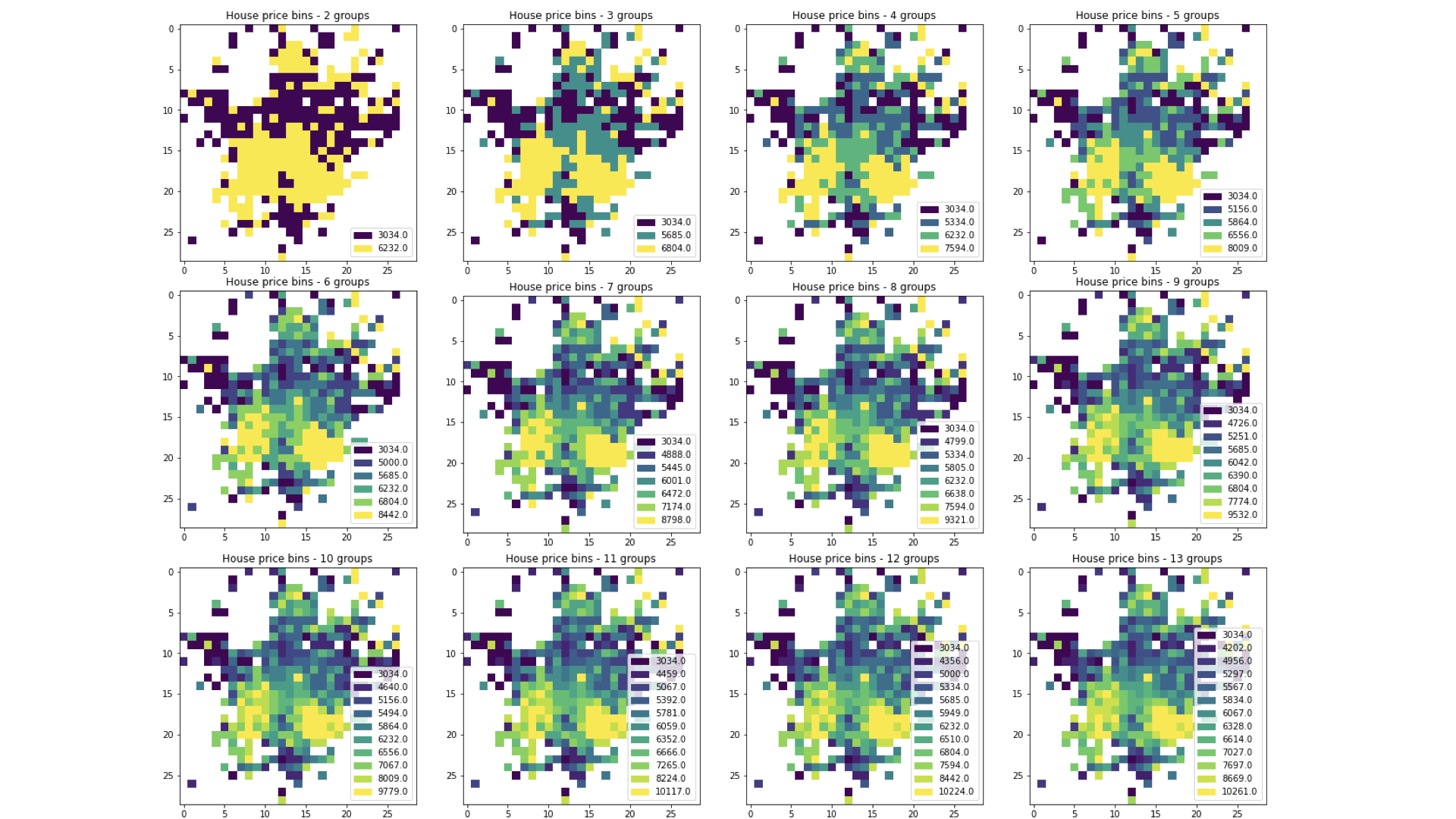}}
    \Description{Map of the MO-TNDP Xi'an environment showing different groups based on equally sized buckets of the average house price index. Each group represents a range of house prices.}
    \caption{MO-TNDP Xi'an Environment with different groups based on equally sized buckets of the average house price index.}
    \label{fig:xian_groups_full}
\end{figure}

\begin{figure}[ht]
    \centerline{\includegraphics[width=\columnwidth]{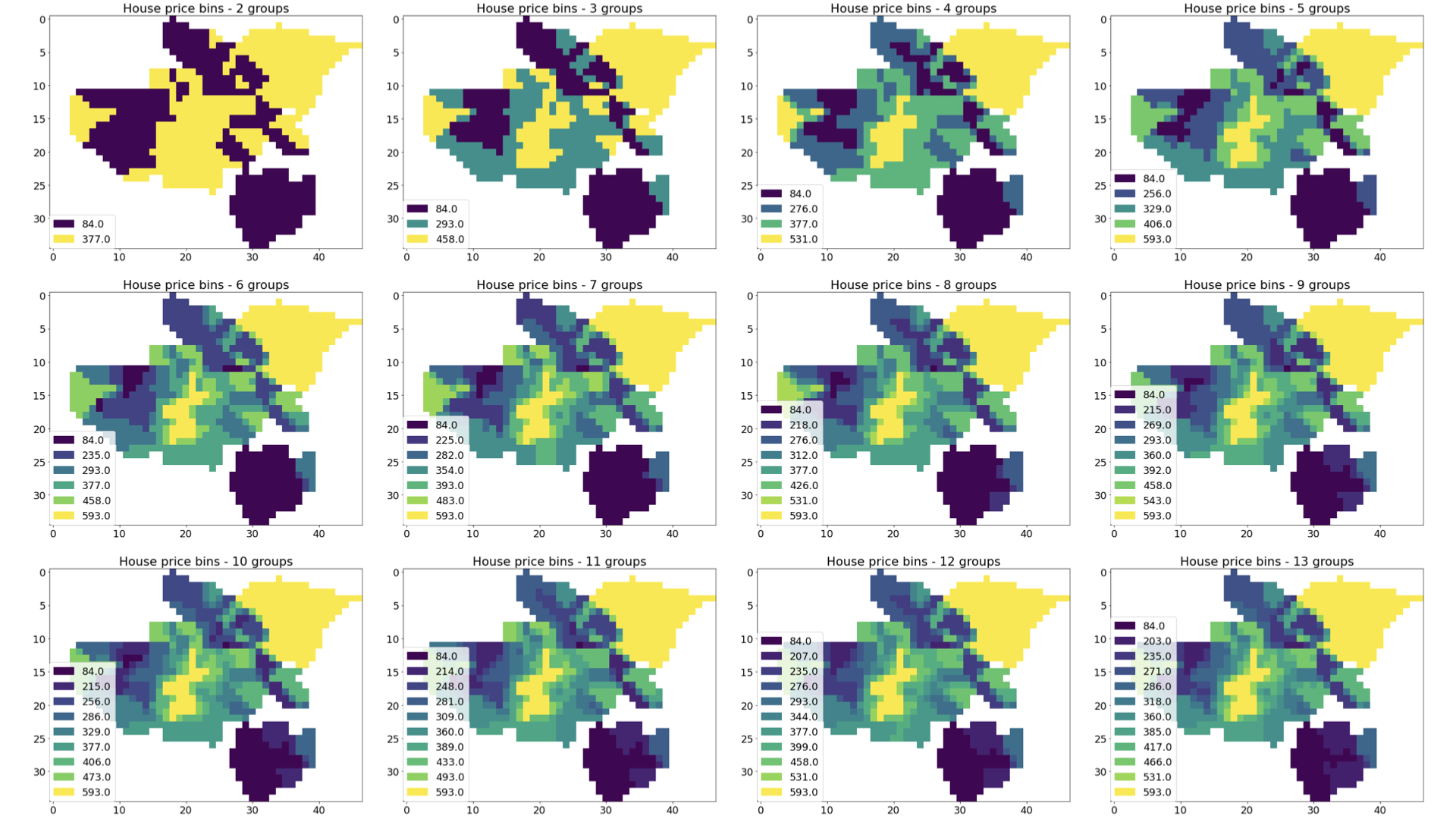}}
    \Description{Map of the MO-TNDP Amsterdam environment showing different groups based on equally sized buckets of the average house price index. Each group represents a range of house prices.}
    \caption{MO-TNDP Amsterdam Environment with different groups based on equally sized buckets of the average house price index.}
    \label{fig:ams_grps_full}
\end{figure}

\section{Experiment Reproducibility Details}
\label{sec:app_exp_details}

Each model presented in the paper for training MO-TNDP was trained for 30000 steps.
Hyperparameters were tuned via a Bayes search over 100 settings, with the following ranges:

\textbf{PCN/LCN}
\begin{itemize}
    \item Batch size: [128, 256]
    \item Learning Rate: [0.1, 0.01]
    \item Number of Linear Layers: [1, 2]
    \item Hidden Dims: [64, 128]
    \item Experience Replay Buffer Size: [50, 100]
    \item Model Updates: [5, 10]
\end{itemize}

\textbf{GPI-LS}
\begin{itemize}
    \item Network Architecture: [64, 64, 64]
    \item Learning Rate: [0.00001, 0.0001, 0.001, 0.01]
    \item Batch Size: [16, 32, 64, 128, 256, 512]
    \item Buffer Size: [256, 512, 2048, 4096, 8192, 16384, 32768]
    \item Learning Starts: 50
    \item Target Net Update Frequency: [10, 20, 50, 100]
    \item Gradient Updates: [1, 2, 5]
\end{itemize}

\cref{fig:architecture} shows the architecture used for the policy network. In the provided code, we provide the exact commands to reproduce all of our experiments, including the environments, hyperparameters, and seeds used to generate our results. Furthermore, the details of the hyperparameters we used for each experiment are available on a public Notion page \footnote{https://aware-night-ab1.notion.site/Project-B-MO-LCN-Experiment-Tracker-b4d21ab160eb458a9cff9ab9314606a7}. Finally, we commit to sharing the output model weights upon request. 

\begin{figure}[ht]
    \centerline{\includegraphics[width=5.3in]{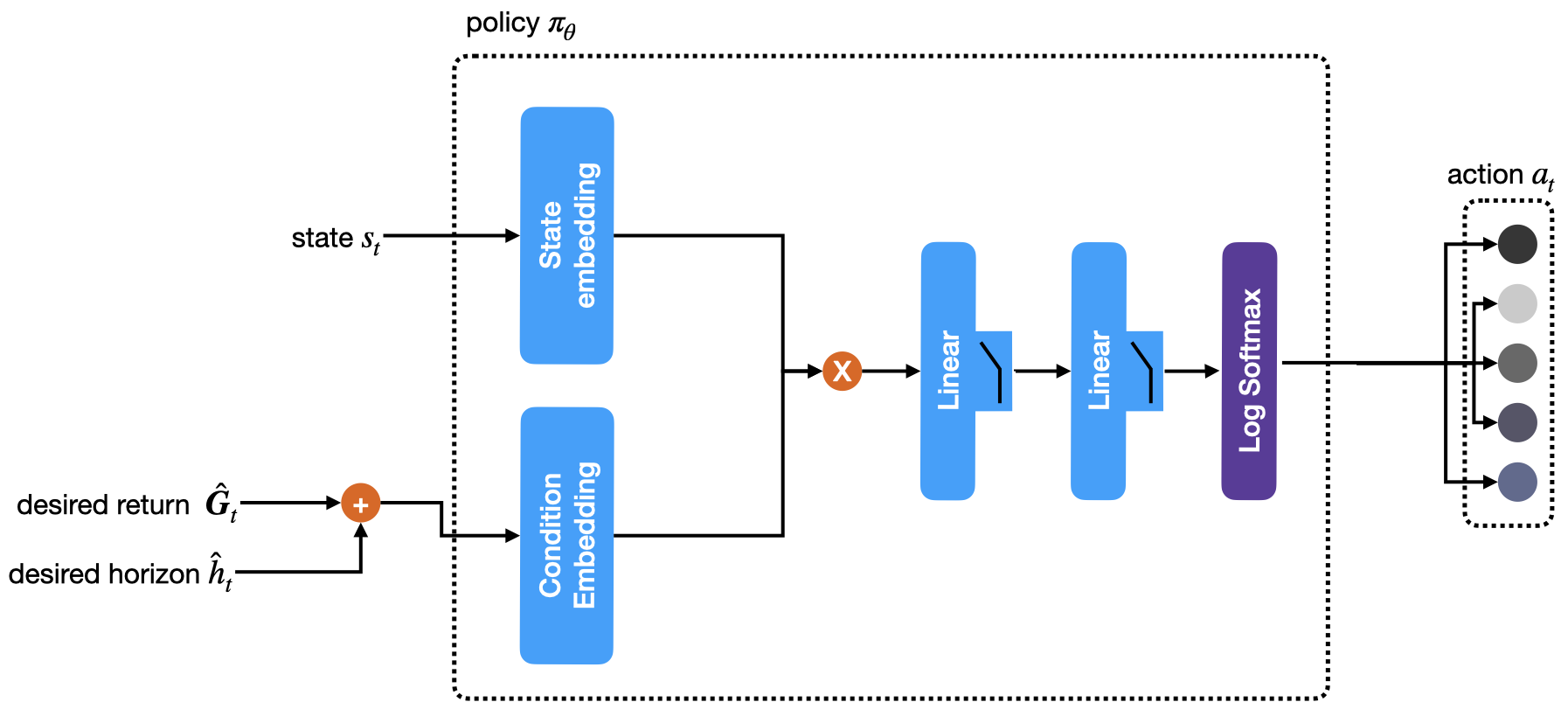}}
    \Description{Diagram showing the architecture of the policy network, including input features, the state and condition embedding, two linear layers after the embedding, and the action probability output.}
    \caption{Architecture of the policy network.}
    \label{fig:architecture}
\end{figure}

\begin{figure}[ht]
    \centerline{\includegraphics[width=\columnwidth]{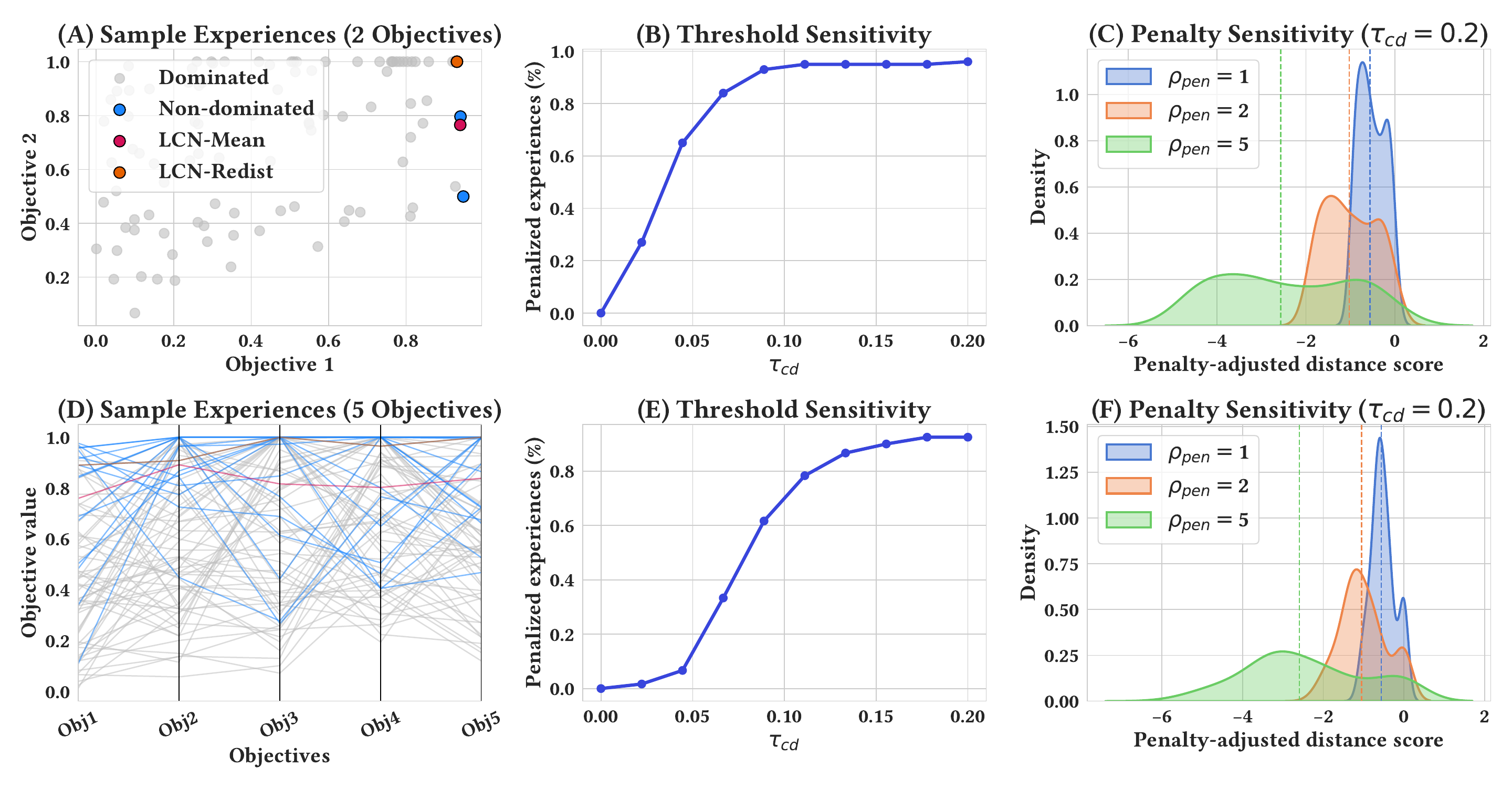}}
    \Description{.}
    \caption{Sensitivity analysis of the crowding-distance penalty: Xi'an Environment, 2 groups (A, B, C) and 5 groups (D, E, F). (A) Randomly sampled experiences, with the non-dominated set and the LCN-Mean and LCN-Redist reference points highlighted. (B) Fraction of sampled experiences that receive a crowding penalty as a function of the crowding-distance threshold ($\tau_{cd}$).(C) Kernel density estimates (KDEs) of the total penalty applied to the experiences for different penalty multipliers, assuming $\tau_{cd}=0.2$.}
    \label{fig:cd_distance_sensitivity}
\end{figure}

\section{Sensitivity Analysis of Crowding Distance Threshold and Penalty}
\label{subsec:appendix:cd_sensitivity_analysis}

We conduct a sensitivity analysis on the effects of the crowding-distance threshold $\tau_{cd}$ and the penalty multiplier $\rho_{pen}$. When the model filters experiences to maintain a consistent Experience Replay (ER) buffer, it applies a distance metric together with a crowdedness threshold. Experiences that lie too close to each other, i.e., those whose crowding distance exceeds the threshold, are penalized to promote diversity, as the objective is to train a broad set of policies for the decision-maker. We examine, in a theoretical setting, how variations in $\tau_{cd}$ and $\rho_{pen}$ influence both the proportion of penalized samples and the overall performance of the algorithm.

In \Cref{fig:cd_distance_sensitivity}, we generated random sets of experiences in a 2-dimension (Panel A) and 5-dimension setting (Panel D). We show the non-dominated points, as well as the LCN-Mean and LCN-Redist reference points. In Panels C and E we show the sensitivity of the generated experiences to the crowding distance threshold. 

The influence of the crowding distance threshold ($\tau_{cd}$) varies significantly with the dimensionality of the objective space. In low dimensions (e.g., 2D), experiences cluster closer in the crowding-distance metric. Consequently, the sensitivity to $\tau_{cd}$ is high initially, with the proportion of penalized points peaking sharply at a very low threshold (e.g., $\tau_{cd} \approx 0.1$), where nearly all points are penalized. This calls for setting a low $\tau_{cd}$ in low-dimensional settings for the crowding distance penalty to remain effective. Conversely, in higher dimensions (e.g., 5D), the crowding distance penalty peaks at a later stage (e.g., $\tau_{cd} \approx 0.2$). The points are naturally more spread out, giving the designer finer control over the number of penalized experiences.

The penalty multiplier ($\rho_{pen}$) impacts the distribution of the penalty-adjusted distance scores. With a low penalty multiplier ($\rho_{pen} = 1$), the distribution of the adjusted scores is concentrated close to zero, with low variance and thin tails, leading to a minimal scaling effect beyond the crowding distance. However, increasing $\rho_{pen}$ significantly skews the distribution, introducing heavier tails and increasing the variance of the penalty. This high multiplier aggressively penalizes points below the threshold, meaning points just below the threshold are greatly favored over those further away. A higher $\rho_{pen}$ is desirable in low-diversity environments (where policies are very close) to discriminate between them. A lower $\rho_{pen}$ suffices in high-diversity environments where distance itself is a sufficient discriminator. In this paper, we scale our environment to high dimensions, however, we operate in a relatively low-variance environment with hard constraints on the generated policies, hence we opted for a middle-point penalty of $\rho_{pen} = 2$.

\section{Additional Results}

\begin{figure*}[ht]
    \centerline{\includegraphics[width=6.5in]{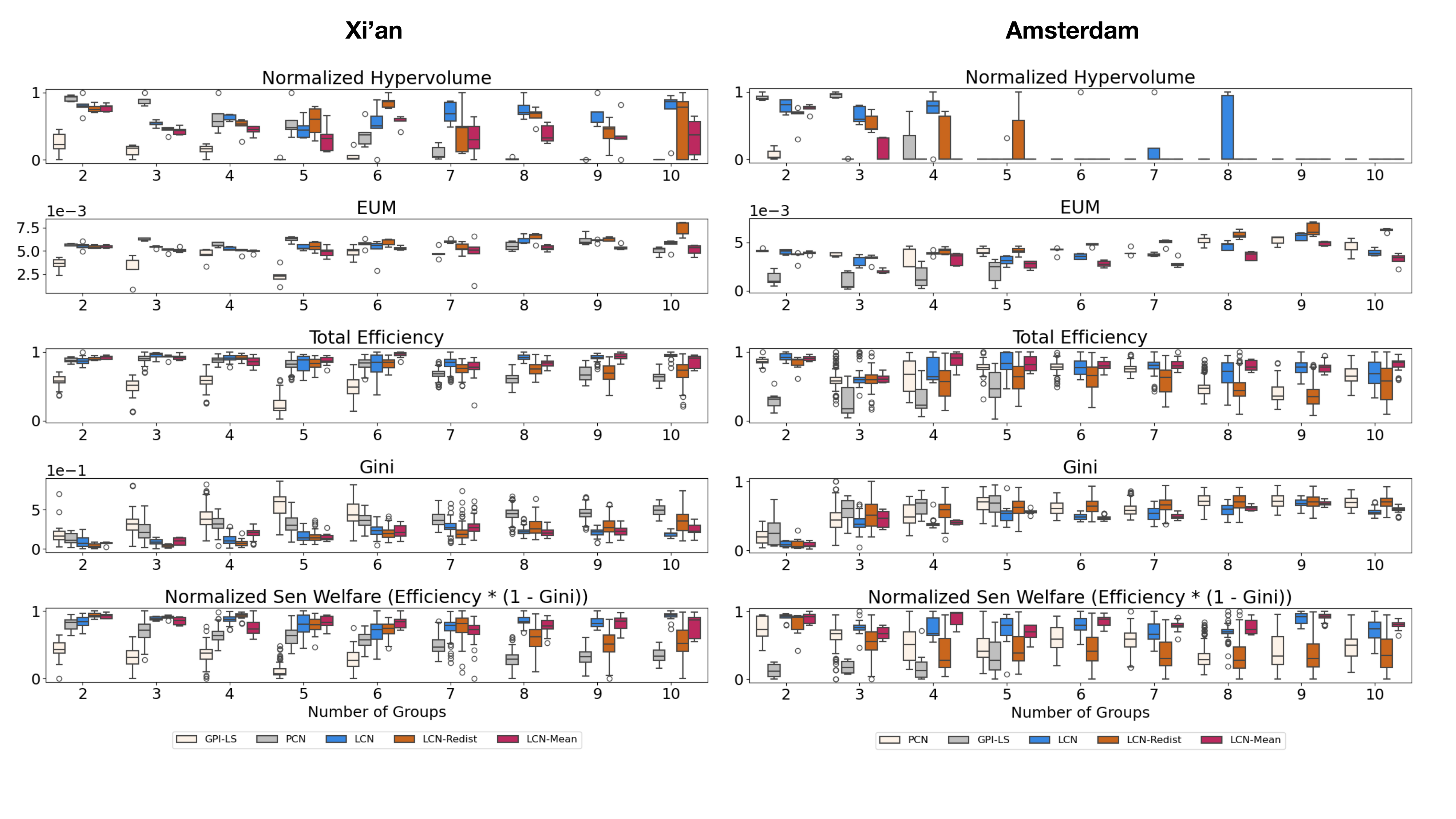}}
    \Description{Comprehensive results for the MO-TNDP Amsterdam and Xi'an environments.}
    \caption{Full results for the MO-TNDP Amsterdam and Xi'an Environments.}
    \label{fig:full_results_all_metrics}
\end{figure*}

\begin{figure*}[ht]
    \centerline{\includegraphics[width=5.5in]{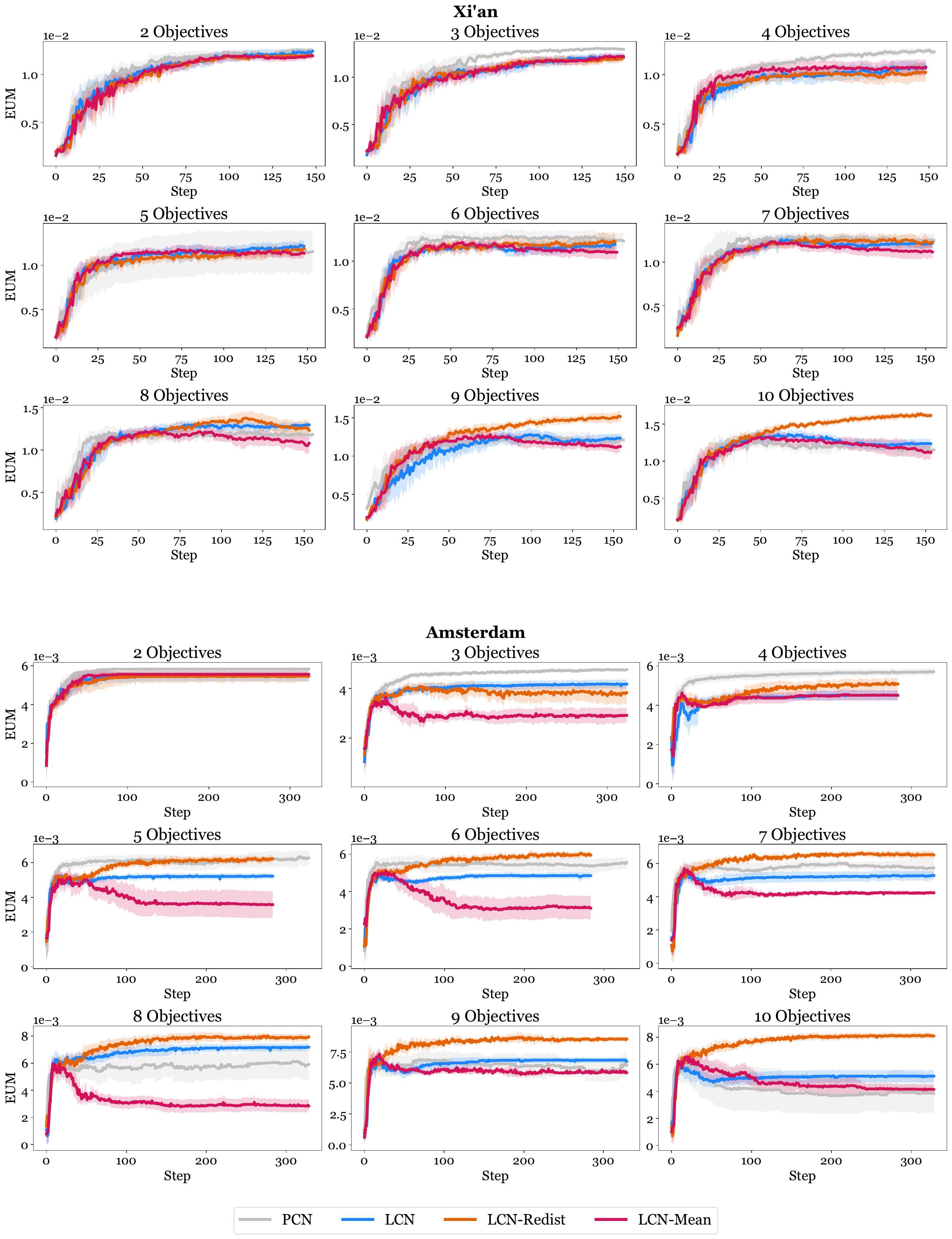}}
    \Description{Learning curves for the EUM metric across multiple objectives, showing convergence over training iterations.}
    \caption{Learning curves for EUM.}
    \label{fig:eum_lines_all}
\end{figure*}

\newpage

\begin{figure*}[ht]
    \centerline{\includegraphics[width=5.5in]{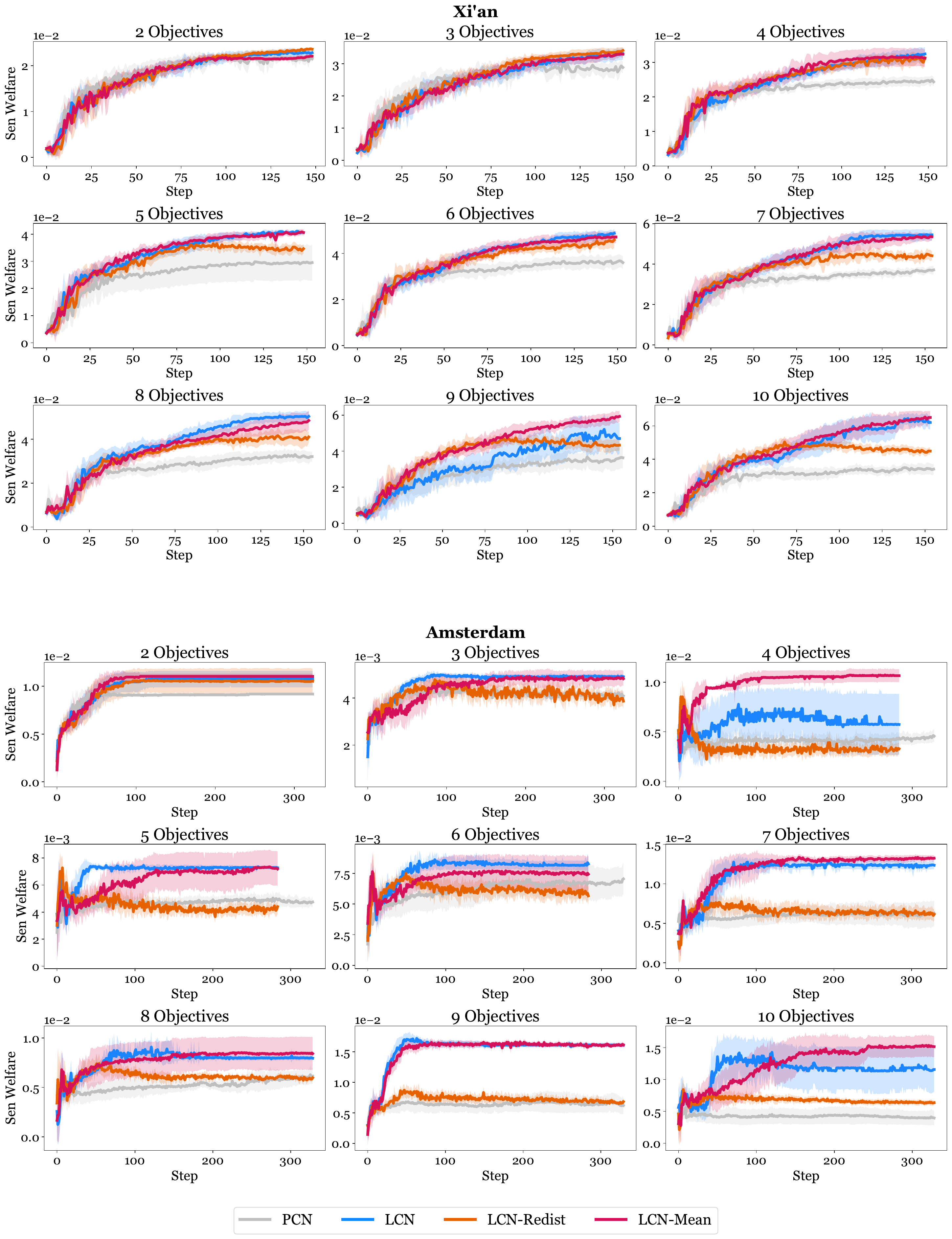}}
    \Description{Learning curves for the Sen Welfare metric across multiple objectives.}
    \caption{Learning curves for Sen Welfare.}
    \label{fig:sw_lines_all}
\end{figure*}

\begin{table*}[]
\footnotesize
\caption{Results of all models, for 1--10 objectives. Underline indicates the best results.}
\label{tbl:full_results}
\resizebox{\textwidth}{!}{
\begin{tabular}{llrrrrrrrr}
\hline
                                 & \multicolumn{9}{c}{Normalized hypervolume}                                                                                                                                                                                                                                                                              \\ \hline
                                 & \multicolumn{9}{c}{Number of Objectives}                                                                                                                                                                                                                                                                                \\ \hline
\multicolumn{1}{l|}{Xi'an}       & \multicolumn{1}{c}{2}       & \multicolumn{1}{c}{3}               & \multicolumn{1}{c}{4}               & \multicolumn{1}{c}{5}               & \multicolumn{1}{c}{6}                           & \multicolumn{1}{c}{7}       & \multicolumn{1}{c}{8}       & \multicolumn{1}{c}{9}       & \multicolumn{1}{c}{10}      \\ \hline
\multicolumn{1}{l|}{GPI-LS}      & $0.11 \pm 0.03$             & \multicolumn{1}{l}{$0.08 \pm 0.02$} & \multicolumn{1}{l}{$0.05 \pm 0.03$} & \multicolumn{1}{l}{$0.03 \pm 0.01$} & \multicolumn{1}{l}{$0.02 \pm 0.01$}             & \multicolumn{1}{l}{$--$}    & \multicolumn{1}{l}{$--$}    & \multicolumn{1}{l}{$--$}    & \multicolumn{1}{l}{$--$}    \\
\multicolumn{1}{l|}{PCN}         & $\underline{0.89 \pm 0.02}$ & $\underline{0.93 \pm 0.02}$         & $\underline{0.83 \pm 0.04}$         & $0.63 \pm 0.08$                     & $0.56 \pm 0.08$                                 & $0.17 \pm 0.04$             & $0.07 \pm 0.03$             & $0.02 \pm 0.02$             & $0.00 \pm 0.00$             \\
\multicolumn{1}{l|}{LCN}         & $0.86 \pm 0.03$             & $0.71 \pm 0.03$                     & $0.54 \pm 0.04$                     & $\underline{0.71 \pm 0.06}$         & $\underline{0.67 \pm 0.06}$                     & $\underline{0.55 \pm 0.08}$ & $\underline{0.60 \pm 0.08}$ & $0.29 \pm 0.11$             & $\underline{0.46 \pm 0.07}$ \\
\multicolumn{1}{l|}{LCN-Redist.} & $0.79 \pm 0.01$             & $0.69 \pm 0.03$                     & $0.48 \pm 0.06$                     & $0.60 \pm 0.04$                     & $0.64 \pm 0.05$                                 & $0.39 \pm 0.03$             & $0.53 \pm 0.08$             & $\underline{0.31 \pm 0.11}$ & $0.19 \pm 0.08$             \\
\multicolumn{1}{l|}{LCN-Mean}    & $0.78 \pm 0.02$             & $0.70 \pm 0.02$                     & $0.52 \pm 0.07$                     & $0.51 \pm 0.05$                     & $0.49 \pm 0.08$                                 & $0.37 \pm 0.08$             & $0.12 \pm 0.08$             & $0.15 \pm 0.08$             & $0.28 \pm 0.10$             \\ \hline
\multicolumn{1}{l|}{Amsterdam}   & \multicolumn{1}{c}{2}       & \multicolumn{1}{c}{3}               & \multicolumn{1}{c}{4}               & \multicolumn{1}{c}{5}               & \multicolumn{1}{c}{6}                           & \multicolumn{1}{c}{7}       & \multicolumn{1}{c}{8}       & \multicolumn{1}{c}{9}       & \multicolumn{1}{c}{10}      \\ \hline
\multicolumn{1}{l|}{GPI-LS}      & $0.12 \pm 0.05$             & \multicolumn{1}{l}{$0.22 \pm 0.06$} & \multicolumn{1}{l}{$0.06 \pm 0.06$} & \multicolumn{1}{l}{$0.00 \pm 0.00$} & \multicolumn{1}{l}{$--$}                        & \multicolumn{1}{l}{$--$}    & \multicolumn{1}{l}{$--$}    & \multicolumn{1}{l}{$--$}    & \multicolumn{1}{l}{$--$}    \\
\multicolumn{1}{l|}{PCN}         & $\underline{0.97 \pm 0.01}$ & $\underline{0.80 \pm 0.04}$         & $0.17 \pm 0.11$                     & $0.00 \pm 0.00$                     & $0.00 \pm 0.00$                                 & $0.00 \pm 0.00$             & $0.00 \pm 0.00$             & $0.00 \pm 0.00$             & $0.00 \pm 0.00$             \\
\multicolumn{1}{l|}{LCN}         & $0.80 \pm 0.03$             & $0.67 \pm 0.04$                     & $\underline{0.41 \pm 0.14}$         & $0.00 \pm 0.00$                     & $0.00 \pm 0.00$                                 & $0.00 \pm 0.00$             & $0.00 \pm 0.00$             & $0.00 \pm 0.00$             & $0.00 \pm 0.00$             \\
\multicolumn{1}{l|}{LCN-Redist.} & $0.79 \pm 0.03$             & $0.53 \pm 0.03$                     & $0.00 \pm 0.00$                     & $0.00 \pm 0.00$                     & $0.00 \pm 0.00$                                 & $0.00 \pm 0.00$             & $0.00 \pm 0.00$             & $0.00 \pm 0.00$             & $0.00 \pm 0.00$             \\
\multicolumn{1}{l|}{LCN-Mean}    & $0.81 \pm 0.01$             & $0.27 \pm 0.05$                     & $0.00 \pm 0.00$                     & $0.00 \pm 0.00$                     & $0.00 \pm 0.00$                                 & $0.00 \pm 0.00$             & $0.00 \pm 0.00$             & $0.00 \pm 0.00$             & $0.00 \pm 0.00$             \\ \hline
                                 & \multicolumn{9}{c}{Normalized EUM}                                                                                                                                                                                                                                                                                      \\ \hline
\multicolumn{1}{c|}{Xi'an}       & \multicolumn{1}{c}{2}       & \multicolumn{1}{c}{3}               & \multicolumn{1}{c}{4}               & \multicolumn{1}{c}{5}               & \multicolumn{1}{c}{6}                           & \multicolumn{1}{c}{7}       & \multicolumn{1}{c}{8}       & \multicolumn{1}{c}{9}       & \multicolumn{1}{c}{10}      \\ \hline
\multicolumn{1}{l|}{GPI-LS}      & $0.27 \pm 0.06$             & \multicolumn{1}{l}{$0.24 \pm 0.06$} & \multicolumn{1}{l}{$0.33 \pm 0.08$} & \multicolumn{1}{l}{$0.42 \pm 0.08$} & \multicolumn{1}{l}{$0.38 \pm 0.08$}             & \multicolumn{1}{l}{$--$}    & \multicolumn{1}{l}{$--$}    & \multicolumn{1}{l}{$--$}    & \multicolumn{1}{l}{$--$}    \\
\multicolumn{1}{l|}{PCN}         & $\underline{0.96 \pm 0.01}$ & $0\underline{.97 \pm 0.01}$         & $\underline{0.94 \pm 0.02}$         & $0.86 \pm 0.10$                     & $0.83 \pm 0.04$                                 & $\underline{0.57 \pm 0.09}$ & $0.55 \pm 0.04$             & $0.39 \pm 0.07$             & $0.27 \pm 0.05$             \\
\multicolumn{1}{l|}{LCN}         & $0.95 \pm 0.01$             & $0.89 \pm 0.02$                     & $0.79 \pm 0.02$                     & $\underline{0.91 \pm 0.02}$         & $0.80 \pm 0.02$                                 & $0.50 \pm 0.06$             & $\underline{0.74 \pm 0.03}$ & $0.42 \pm 0.04$             & $0.38 \pm 0.04$             \\
\multicolumn{1}{l|}{LCN-Redist.} & $0.92 \pm 0.01$             & $0.88 \pm 0.02$                     & $0.73 \pm 0.04$                     & $0.89 \pm 0.02$                     & $\underline{0.83 \pm 0.04}$                     & $0.52 \pm 0.07$             & $0.63 \pm 0.07$             & $\underline{0.86 \pm 0.05}$ & $\underline{0.92 \pm 0.02}$ \\
\multicolumn{1}{l|}{LCN-Mean}    & $0.91 \pm 0.01$             & $0.89 \pm 0.01$                     & $0.78 \pm 0.04$                     & $0.84 \pm 0.02$                     & $0.71 \pm 0.03$                                 & $0.26 \pm 0.09$             & $0.39 \pm 0.07$             & $0.25 \pm 0.05$             & $0.22 \pm 0.06$             \\ \hline
\multicolumn{1}{l|}{Amsterdam}   & \multicolumn{1}{c}{2}       & \multicolumn{1}{c}{3}               & \multicolumn{1}{c}{4}               & \multicolumn{1}{c}{5}               & \multicolumn{1}{c}{6}                           & \multicolumn{1}{c}{7}       & \multicolumn{1}{c}{8}       & \multicolumn{1}{c}{9}       & \multicolumn{1}{c}{10}      \\ \hline
\multicolumn{1}{l|}{GPI-LS}      & $0.30 \pm 0.07$             & \multicolumn{1}{l}{$0.85 \pm 0.05$} & \multicolumn{1}{l}{$0.92 \pm 0.02$} & \multicolumn{1}{l}{$0.67 \pm 0.07$} & \multicolumn{1}{l}{$--$}                        & \multicolumn{1}{l}{$--$}    & \multicolumn{1}{l}{$--$}    & \multicolumn{1}{l}{$--$}    & \multicolumn{1}{l}{$--$}    \\
\multicolumn{1}{l|}{PCN}         & $\underline{0.99 \pm 0.00}$ & $\underline{0.95 \pm 0.01}$         & $\underline{0.75 \pm 0.04}$         & $\underline{0.80 \pm 0.04}$         & $0.76 \pm 0.04$                                 & $0.62 \pm 0.09$             & $0.59 \pm 0.07$             & $0.36 \pm 0.10$             & $0.32 \pm 0.09$             \\
\multicolumn{1}{l|}{LCN}         & $0.93 \pm 0.02$             & $0.70 \pm 0.03$                     & $0.21 \pm 0.05$                     & $0.55 \pm 0.02$                     & $0.58 \pm 0.02$                                 & $0.47 \pm 0.04$             & $0.79 \pm 0.02$             & $0.44 \pm 0.03$             & $0.50 \pm 0.03$             \\
\multicolumn{1}{l|}{LCN-Redist.} & $0.92 \pm 0.02$             & $0.55 \pm 0.08$                     & $0.46 \pm 0.07$                     & $0.79 \pm 0.02$                     & $\underline{0.86 \pm 0.02}$                     & $\underline{0.89 \pm 0.03}$ & $\underline{0.90 \pm 0.02}$ & $\underline{0.88 \pm 0.02}$ & $\underline{0.92 \pm 0.01}$ \\
\multicolumn{1}{l|}{LCN-Mean}    & $0.94 \pm 0.01$             & $0.15 \pm 0.06$                     & $0.21 \pm 0.05$                     & $0.17 \pm 0.08$                     & $0.12 \pm 0.07$                                 & $0.11 \pm 0.02$             & $0.12 \pm 0.03$             & $0.22 \pm 0.01$             & $0.36 \pm 0.02$             \\ \hline

                                 & \multicolumn{9}{c}{Normalized Sen Welfare}                                                                                                                                                                                                                                                                                      \\ \hline
\multicolumn{1}{c|}{Xi'an}       & \multicolumn{1}{c}{2}       & \multicolumn{1}{c}{3}               & \multicolumn{1}{c}{4}               & \multicolumn{1}{c}{5}               & \multicolumn{1}{c}{6}                           & \multicolumn{1}{c}{7}       & \multicolumn{1}{c}{8}       & \multicolumn{1}{c}{9}       & \multicolumn{1}{c}{10}      \\ \hline
\multicolumn{1}{l|}{GPI-LS} & $0.21 \pm 0.04$ & \multicolumn{1}{l}{$0.26 \pm 0.02$} & \multicolumn{1}{l}{$0.23 \pm 0.01$} & \multicolumn{1}{l}{$0.25 \pm 0.01$} & \multicolumn{1}{l}{$0.20 \pm 0.01$} & \multicolumn{1}{l}{$--$} & \multicolumn{1}{l}{$--$} & \multicolumn{1}{l}{$--$} & \multicolumn{1}{l}{$--$} \\

\multicolumn{1}{l|}{PCN} & $0.87 \pm 0.01$ & $0.78 \pm 0.01$ & $0.67 \pm 0.01$ & $0.71 \pm 0.01$ & $0.64 \pm 0.01$ & $0.51 \pm 0.01$ & $0.46 \pm 0.01$ & $0.45 \pm 0.01$ & $0.43 \pm 0.00$ \\

\multicolumn{1}{l|}{LCN} & $0.93 \pm 0.01$ & $0.91 \pm 0.02$ & $0.86 \pm 0.02$ & $\underline{0.91 \pm 0.00}$ & $\underline{0.87 \pm 0.01}$ & $\underline{0.79 \pm 0.01}$ & $\underline{0.77 \pm 0.01}$ & $0.62 \pm 0.02$ & $\underline{0.81 \pm 0.01}$ \\

\multicolumn{1}{l|}{LCN-Redist.} & $\underline{0.97 \pm 0.01}$ & $\underline{0.96 \pm 0.01}$ & $0.82 \pm 0.02$ & $0.77 \pm 0.01$ & $0.77 \pm 0.01$ & $0.62 \pm 0.01$ & $0.59 \pm 0.01$ & $0.54 \pm 0.01$ & $\underline{0.56 \pm 0.01}$ \\

\multicolumn{1}{l|}{LCN-Mean} & $0.90 \pm 0.01$ & $0.92 \pm 0.02$ & $\underline{0.87 \pm 0.01}$ & $0.90 \pm 0.01$ & $0.84 \pm 0.01$ & $0.78 \pm 0.01$ & $0.71 \pm 0.01$ & $\underline{0.76 \pm 0.01}$ & $0.80 \pm 0.01$ \\ \hline

\multicolumn{1}{l|}{Amsterdam}   & \multicolumn{1}{c}{2}       & \multicolumn{1}{c}{3}               & \multicolumn{1}{c}{4}               & \multicolumn{1}{c}{5}               & \multicolumn{1}{c}{6}                           & \multicolumn{1}{c}{7}       & \multicolumn{1}{c}{8}       & \multicolumn{1}{c}{9}       & \multicolumn{1}{c}{10}      \\ \hline
\multicolumn{1}{l|}{GPI-LS} & $0.16 \pm 0.03$ & \multicolumn{1}{l}{$0.54 \pm 0.03$} & \multicolumn{1}{l}{$0.47 \pm 0.03$} & \multicolumn{1}{l}{$0.32 \pm 0.01$} & \multicolumn{1}{l}{$--$} & \multicolumn{1}{l}{$--$} & \multicolumn{1}{l}{$--$} & \multicolumn{1}{l}{$--$} & \multicolumn{1}{l}{$--$} \\

\multicolumn{1}{l|}{PCN} & $0.82 \pm 0.02$ & $0.69 \pm 0.01$ & $0.42 \pm 0.01$ & $0.43 \pm 0.01$ & $0.49 \pm 0.01$ & $0.42 \pm 0.01$ & $0.34 \pm 0.01$ & $0.37 \pm 0.01$ & $0.22 \pm 0.01$ \\

\multicolumn{1}{l|}{LCN} & $0.93 \pm 0.04$ & $\underline{0.88 \pm 0.01}$ & $0.45 \pm 0.07$ & $\underline{0.61 \pm 0.01}$ & $\underline{0.66 \pm 0.02}$ & $0.73 \pm 0.02$ & $\underline{0.45 \pm 0.01}$ & $0.89 \pm 0.01$ & $0.62 \pm 0.04$ \\

\multicolumn{1}{l|}{LCN-Redist.} & $0.86 \pm 0.07$ & $0.71 \pm 0.02$ & $0.36 \pm 0.03$ & $0.40 \pm 0.02$ & $0.43 \pm 0.02$ & $0.41 \pm 0.01$ & $0.34 \pm 0.01$ & $0.42 \pm 0.01$ & $0.38 \pm 0.01$ \\

\multicolumn{1}{l|}{LCN-Mean} & $\underline{0.99 \pm 0.01}$ & $0.85 \pm 0.02$ & $\underline{0.90 \pm 0.02}$ & $0.54 \pm 0.03$ & $0.47 \pm 0.02$ & $\underline{0.83 \pm 0.01}$ & $0.39 \pm 0.01$ & $\underline{0.90 \pm 0.01}$ & $\underline{0.82 \pm 0.02}$ \\ \hline

\multicolumn{10}{c}{Gini Index (the lower the better)}                                                                                                                                                                                                                                                                                                     \\ \hline
\multicolumn{1}{l|}{Xi'an}       & \multicolumn{1}{c}{2}       & \multicolumn{1}{c}{3}               & \multicolumn{1}{c}{4}               & \multicolumn{1}{c}{5}               & \multicolumn{1}{c}{6}                           & \multicolumn{1}{c}{7}       & \multicolumn{1}{c}{8}       & \multicolumn{1}{c}{9}       & \multicolumn{1}{c}{10}      \\ \hline
\multicolumn{1}{l|}{GPI-LS}      & $0.30 \pm 0.05$             & \multicolumn{1}{l}{$0.30 \pm 0.03$} & \multicolumn{1}{l}{$0.46 \pm 0.02$} & \multicolumn{1}{l}{$0.50 \pm 0.02$} & \multicolumn{1}{l}{$\underline{0.49 \pm 0.01}$} & \multicolumn{1}{l}{$--$}    & \multicolumn{1}{l}{$--$}    & \multicolumn{1}{l}{$--$}    & \multicolumn{1}{l}{$--$}    \\
\multicolumn{1}{l|}{PCN}         & $0.10 \pm 0.01$             & $0.18 \pm 0.01$                     & $0.30 \pm 0.01$                     & $0.30 \pm 0.00$                     & $0.30 \pm 0.00$                                 & $0.34 \pm 0.00$             & $0.44 \pm 0.00$             & $0.43 \pm 0.00$             & $0.46 \pm 0.00$             \\
\multicolumn{1}{l|}{LCN}         & $0.06 \pm 0.01$             & $0.09 \pm 0.01$                     & $0.19 \pm 0.01$                     & $0.25 \pm 0.01$                     & $0.23 \pm 0.01$                                 & $\underline{0.25 \pm 0.00}$ & $\underline{0.33 \pm 0.01}$ & $\underline{0.30 \pm 0.00}$ & $\underline{0.30 \pm 0.00}$ \\
\multicolumn{1}{l|}{LCN-Redist.} & $\underline{0.01 \pm 0.00}$ & $\underline{0.05 \pm 0.00}$         & $\underline{0.18 \pm 0.01}$         & $\underline{0.23 \pm 0.01}$         & $\underline{0.21 \pm 0.01}$                     & $0.27 \pm 0.01$             & $\underline{0.33 \pm 0.01}$ & $0.40 \pm 0.01$             & $0.45 \pm 0.00$             \\
\multicolumn{1}{l|}{LCN-Mean}    & $0.07 \pm 0.01$             & $0.10 \pm 0.01$                     & $0.24 \pm 0.01$                     & $0.25 \pm 0.01$                     & $0.23 \pm 0.01$                                 & $0.26 \pm 0.01$             & $0.39 \pm 0.01$             & $\underline{0.30 \pm 0.00}$ & $0.32 \pm 0.00$             \\ \hline
\multicolumn{1}{l|}{Amsterdam}   & \multicolumn{1}{c}{2}       & \multicolumn{1}{c}{3}               & \multicolumn{1}{c}{4}               & \multicolumn{1}{c}{5}               & \multicolumn{1}{c}{6}                           & \multicolumn{1}{c}{7}       & \multicolumn{1}{c}{8}       & \multicolumn{1}{c}{9}       & \multicolumn{1}{c}{10}      \\ \hline
\multicolumn{1}{l|}{GPI-LS}      & $0.54 \pm 0.05$             & \multicolumn{1}{l}{$0.64 \pm 0.02$} & \multicolumn{1}{l}{$0.62 \pm 0.01$} & \multicolumn{1}{l}{$0.70 \pm 0.01$} & \multicolumn{1}{l}{$--$}                        & \multicolumn{1}{l}{$--$}    & \multicolumn{1}{l}{$--$}    & \multicolumn{1}{l}{$--$}    & \multicolumn{1}{l}{$--$}    \\
\multicolumn{1}{l|}{PCN}         & $0.13 \pm 0.02$             & $0.48 \pm 0.01$                     & $0.58 \pm 0.01$                     & $0.67 \pm 0.01$                     & $0.63 \pm 0.01$                                 & $0.66 \pm 0.01$             & $0.71 \pm 0.01$             & $0.72 \pm 0.01$             & $0.74 \pm 0.01$             \\
\multicolumn{1}{l|}{LCN}         & $0.03 \pm 0.02$             & $\underline{0.37 \pm 0.01}$         & $\underline{0.43 \pm 0.01}$         & $\underline{0.52 \pm 0.01}$         & $\underline{0.47 \pm 0.01}$                     & $\underline{0.50 \pm 0.00}$ & $\underline{0.60 \pm 0.00}$ & $0.67 \pm 0.01$             & $\underline{0.57 \pm 0.01}$ \\
\multicolumn{1}{l|}{LCN-Redist.} & $0.06 \pm 0.04$             & $0.45 \pm 0.02$                     & $0.61 \pm 0.02$                     & $0.64 \pm 0.01$                     & $0.62 \pm 0.01$                                 & $0.64 \pm 0.01$             & $0.69 \pm 0.01$             & $0.69 \pm 0.00$             & $0.71 \pm 0.00$             \\
\multicolumn{1}{l|}{LCN-Mean}    & $\underline{0.01 \pm 0.00}$ & $0.40 \pm 0.02$                     & $0.40 \pm 0.01$                     & $0.56 \pm 0.02$                     & $0.54 \pm 0.01$                                 & $0.50 \pm 0.01$             & $0.60 \pm 0.01$             & $\underline{0.65 \pm 0.00}$ & $0.58 \pm 0.00$             \\ \hline
\end{tabular}

}
\end{table*}

\textbf{Total efficiency}: measures how effectively a generated line captures the total travel demand of a city. It is calculated as the simple sum of all elements in the value vector -- the sum of all satisfied demands for each group.

\textbf{Gini coefficient}: quantifies the reward distribution among various groups in the city. A value of 0 indicates perfect equality (equal percentage of satisfied OD flows per group), while 1 represents perfect inequality (only the demand of one group is satisfied). Although traditionally used to assess income inequality \cite{zheng_ai_2022}, it has also been employed in the context of transportation network design \cite{feng_multicriteria_2014}. In \cref{fig:full_results_all_metrics} and \cref{tbl:full_results}, we show detailed results on both Xi'an and Amsterdam for all objectives. \cref{fig:eum_lines_all} and \cref{fig:sw_lines_all} show the learning curves of PCN and LCN for all objectives.

\clearpage

\end{document}